\documentclass[runningheads]{llncs}

\usepackage{eccv}

\usepackage{eccvabbrv}

\usepackage{graphicx}
\usepackage{booktabs}
\usepackage{tikz}
\usetikzlibrary{decorations.pathreplacing}

\usepackage[accsupp]{axessibility}  

\usepackage{hyperref}

\usepackage{orcidlink}

\usepackage{multirow}
\usepackage{amsmath} 
\DeclareMathOperator*{\argmax}{arg\,max}
\usepackage{algorithm}
\usepackage{algpseudocode} 
\usepackage{amssymb} 

\newif\ifincludesuppl
\includesuppltrue

\newcommand{\ofmainpaper}{}

\begin{document}

\title{Cross-View Yaw Estimation in Location Uncertainty with Line-Aligning Yaw Scoring}

\titlerunning{LAYS: Line-Aligning Yaw Scoring}

\def\ourmethod{LAYS}
\def\ourmethodfullname{Line-Aligning Yaw Scoring}

\let\citep\relax
\let\citet\relax

\NewDocumentCommand{\citep}{ o m }{%
  \IfNoValueTF{#1}
    {\cite{#2}}
    {\cite[#1]{#2}}%
}

\NewDocumentCommand{\citet}{ o m }{%
  \IfNoValueTF{#1}
    {\cite{#2}}
    {\cite[#1]{#2}}%
}

\renewcommand{\thefootnote}{\fnsymbol{footnote}}
\author{Taeho Kang
\inst{1}\orcidlink{0000-0002-4556-5588} \and
Nairan Zhang\thanks{This work was done while working at Amazon.}\inst{2}\orcidlink{0009-0009-2699-1598}
\and 
Yelin Kim\inst{3}\orcidlink{0000-0002-6503-4637} \and
Yujiao Shi\inst{4}\orcidlink{0000-0001-6028-9051} \and
Youngki Lee\inst{1}\orcidlink{0000-0002-1319-7071}}

\authorrunning{Kang et al.}

\institute{Seoul National University, South Korea \and
Meta, USA \and
Amazon, USA \and
ShanghaiTech University, China\\
\email{taeho.kang@hcs.snu.ac.kr, nairanzhang@meta.com, kimyelin@amazon.com, shiyj2@shanghaitech.edu.cn, youngkilee@snu.ac.kr}}

\maketitle
\renewcommand{\thefootnote}{\arabic{footnote}}
\setcounter{footnote}{0}

\begin{abstract}
Accurate yaw estimation is a bottleneck in cross-view localization between ground view and Bird's Eye View (BEV). Existing methods couple yaw with translation and rely on height or projection assumptions that degrade under large yaw ambiguity. We disentangle yaw from location accuracy and introduce LAYS, a radially invariant line-consensus voting method. By exploiting the radial invariance of our formulation, we achieve sub-degree yaw precision via 3D voting over all candidate poses, while eliminating the need for accurate location. Our key observation is that a ground-image column matched to BEV pixels induces the same yaw across all camera positions along the radial direction of the pixels. LAYS matches BEV pixels to ground columns using feature similarity and accumulates the induced yaw votes into discrete 3D bins, where correct correspondences along the radial line concentrate into a sharp peak for the correct yaw. Experiments on Mapillary, Ford, KITTI, and VIGOR show significant gains under unknown yaw, particularly for normal FoV with unknown yaw (+28$\sim$45\%p), and using LAYS as a yaw prior improves downstream 3-DoF localization.
  \keywords{cross-view localization \and aerial view \and 3D vision}
\end{abstract}
\section{Introduction}
\label{sec:intro}
Accurate global localization is essential for autonomous navigation, robotics, and AR/MR systems.
In these applications, orientation errors are particularly harmful: even small angular deviations can lead to severe navigation drift or visual misalignment. Among the three rotation components, yaw is uniquely challenging. Unlike roll and pitch, which can be inferred from gravity-aligned cues, yaw lacks a direct geometric reference in ground images. As a result, yaw becomes a persistent bottleneck. Without accurate yaw, global orientation alignment degenerates into an unstable joint optimization of translation and rotation.

\begin{figure}[!ht]
\centering
\setlength{\abovecaptionskip}{0pt}
\setlength{\belowcaptionskip}{-2pt}
\begin{tikzpicture}[
    >=latex,
    font=\small,
    imgbox/.style={draw, thick, minimum width=1.8cm, minimum height=1.4cm, rounded corners=1pt},
    bevbox/.style={draw, thick, minimum width=2.0cm, minimum height=2.0cm, rounded corners=1pt},
    lbl/.style={font=\footnotesize\bfseries, anchor=south},
    sublbl/.style={font=\scriptsize, text=black!70},
]

\begin{scope}[shift={(0,0)}]
    \node[lbl] at (2.4, 2.15) {(a) Existing Methods};

    \node[imgbox, fill=black!5] (grd_a) at (0.9, 0.9) {};
    \node[sublbl, anchor=south] at (grd_a.north) {Ground};

    \fill[red!70!black] (0.6, 1.15) circle (2.5pt);
    \fill[red!70!black] (1.1, 0.55) circle (2.5pt);

    \node[bevbox, fill=black!5] (bev_a) at (3.9, 0.9) {};
    \node[sublbl, anchor=south] at (bev_a.north) {BEV};

    \fill[orange!80!red] (3.3, 0.45) circle (3pt);
    \fill[orange!30, opacity=0.5] (3.3, 0.45) -- ++(50:0.55) arc (50:110:0.55) -- cycle;
    \draw[orange!60!red, thin] (3.3, 0.45) -- ++(50:0.55);
    \draw[orange!60!red, thin] (3.3, 0.45) -- ++(110:0.55);
    \node[font=\tiny, orange!70!red, anchor=north] at (3.3, 0.38) {pos$_1$};

    \fill[orange!80!red] (4.5, 0.45) circle (3pt);
    \fill[orange!30, opacity=0.5] (4.5, 0.45) -- ++(90:0.55) arc (90:150:0.55) -- cycle;
    \draw[orange!60!red, thin] (4.5, 0.45) -- ++(90:0.55);
    \draw[orange!60!red, thin] (4.5, 0.45) -- ++(150:0.55);
    \node[font=\tiny, orange!70!red, anchor=north] at (4.5, 0.38) {pos$_2$};

    \fill[red!60] (3.30, 1.15) circle (2pt);
    \draw[red!60, dashed, thick, ->] (0.6, 1.15) -- (3.30, 1.15);
    \fill[red!40] (4.05, 0.99) circle (2pt);
    \draw[red!40, dashed, thick, ->] (0.6, 1.15) -- (4.05, 0.99);

    \fill[red!60] (3.40, 0.79) circle (2pt);
    \draw[red!60, dashed, thick, ->] (1.1, 0.55) -- (3.40, 0.79);
    \fill[red!40] (4.36, 0.77) circle (2pt);
    \draw[red!40, dashed, thick, ->] (1.1, 0.55) -- (4.36, 0.77);

    \draw[red!50!black, thick, <->] (3.3, 0.50) -- (3.3, 1.10);
    \node[font=\tiny, red!50!black, anchor=east] at (3.95, 1.0) {$d(h)$};

    \node[font=\tiny, red!60!black, anchor=south] at (2.4, 1.85) {pixel-to-pixel};

    \node[font=\scriptsize\bfseries, red!70!black, align=center] at (2.4, -0.35)
        {\texttimes~Height-dependent \& position variant};
\end{scope}

\draw[black!30, dashed] (5.15, 0.0) -- (5.15, 2.3);

\begin{scope}[shift={(5.4, 0)}]
    \node[lbl] at (2.4, 2.15) {(b) LAYS (Ours)};

    \node[imgbox, fill=black!5] (grd_b) at (0.9, 0.9) {};
    \node[sublbl, anchor=south] at (grd_b.north) {Ground};

    \fill[blue!25] (0.55, 0.2) rectangle (0.75, 1.6);
    \draw[blue!70!black, thick] (0.65, 0.2) -- (0.65, 1.6);
    \node[font=\tiny, blue!70!black, anchor=east] at (0.65, 0.35) {col $k$};

    \draw[blue!60!black, thick, decorate, decoration={brace, amplitude=3pt, mirror}]
        (0.8, 0.3) -- (0.8, 1.5) node[midway, right, font=\tiny, xshift=2pt] {agg.};

    \node[bevbox, fill=black!5] (bev_b) at (3.9, 0.9) {};
    \node[sublbl, anchor=south] at (bev_b.north) {BEV};

    \draw[blue!70!black, very thick] (3.15, 0.15) -- (4.65, 1.65);
    
    \fill[orange!30, opacity=0.5] (4.05, 1.05) -- ++(173:0.55) arc (173:233:0.55) -- cycle;
    \draw[orange!60!red, thin] (4.05, 1.05) -- ++(173:0.55);
    \draw[orange!60!red, thin] (4.05, 1.05) -- ++(233:0.55);

    \fill[orange!30, opacity=0.5] (4.4, 1.4) -- ++(173:0.55) arc (173:233:0.55) -- cycle;
    \draw[orange!60!red, thin] (4.4, 1.4) -- ++(173:0.55);
    \draw[orange!60!red, thin] (4.4, 1.4) -- ++(233:0.55);

    \fill[orange!80!red] (4.05, 1.05) circle (2.5pt);
    \fill[orange!80!red] (4.4, 1.4) circle (2.5pt);
    \node[font=\tiny, orange!70!red, anchor=west] at (4.4, 1.4) {loc.};

    \fill[blue!50] (3.5, 0.5) circle (2.0pt);
    \fill[blue!50] (3.65, 0.65) circle (2.0pt);
    \fill[blue!50] (3.8, 0.8) circle (2.0pt);

    \draw[orange!80!red, thick, ->] (4.05, 1.05) -- (3.7, 0.9);
    \draw[orange!80!red, thick, ->] (4.4, 1.4) -- (4.05, 1.25);

    \draw[blue!70!black, thick, ->] (0.95, 0.7) -- (3.5, 0.7);
    \node[font=\tiny, blue!70!black, anchor=south, align=center] at (2.35, 0.75) {line\\[-1pt]alignment};
    
    \node[font=\tiny, blue!50, anchor=south, align=center] at (4.0, 0.0) {matched\\[-1pt]pixels};

    \node[font=\scriptsize\bfseries, blue!70!black, align=center] at (2.4, -0.35)
        {\checkmark~Height-free \& radially invariant};
\end{scope}

\end{tikzpicture}
\caption{ 
\textbf{(a)}~Existing methods rely on pixel-to-pixel correspondences. Their BEV projections shift based on the assumed camera pose entangling location and yaw, and require a distance estimate $d(h)$ dependent on ground height $h$. \textbf{(b)}~\ourmethod~aggregates ground pixels vertically into a column and aligns it to a BEV radial direction. This height-free, column-to-line correspondence makes estimated yaw invariant for any candidate position along the radial line, successfully decoupling yaw from location.}
\label{fig:comparison_baseline_ours}
\end{figure}

\begin{figure*}[!ht]
\centering
\setlength{\abovecaptionskip}{0pt}
\setlength{\belowcaptionskip}{-2pt}
\begin{tikzpicture}[
    >=latex,
    font=\small,
    lbl/.style={font=\footnotesize\bfseries, anchor=south},
]

\begin{scope}[shift={(0,0)}]
    \node[lbl] at (1, 2.6) {(a) Ground Camera Space};
    
    \node[inner sep=0pt, anchor=south west, opacity=0.4] at (0.2, 1.3) {\includegraphics[width=2.1cm, height=1.0cm]{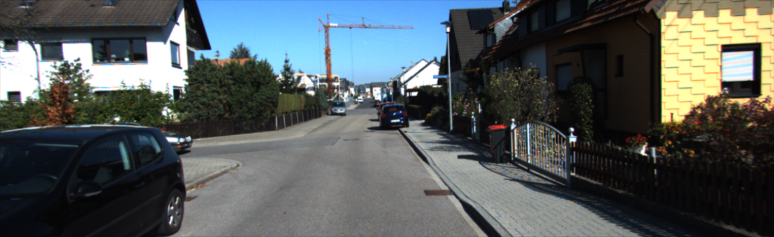}};
    \draw[thick, black!80, rounded corners=1pt] (0.2, 1.3) rectangle (2.3, 2.3);
    
    \fill[blue!50, opacity=0.4] (0.7, 1.3) rectangle (0.9, 2.3);
    \draw[blue!70!black, thick] (0.8, 1.3) -- (0.8, 2.3);
    \node[font=\tiny, blue!70!black, anchor=south, inner sep=1pt] (gc_node) at (0.8, 2.3) {Column};
    
    \draw[orange, dashed, thick, ->] (1.25, 0.2) -- (1.25, 1.3);
    \node[font=\scriptsize, orange, anchor=west, inner sep=2pt] at (1.25, 1.0) {Front};
    
    \draw[blue!70!black, thick, ->] (1.25, 0.2) -- (0.8, 1.3);
    
    \draw[blue!70!black, thick, -] (1.25, 0.7) arc (90:112.5:0.5);
    \node[font=\tiny, blue!70!black, anchor=east, inner sep=1pt] at (0.9, 0.8) {Rel. Yaw};
    
    \fill[orange] (1.25, 0.2) circle (2.5pt) node[left=2pt, font=\scriptsize] {Camera};
\end{scope}

\draw[black!30, dashed] (2.7, 0.0) -- (2.7, 2.5);

\begin{scope}[shift={(3.0,0)}]
    \node[lbl] at (1.9, 2.6) {(b) BEV Space};
    
    \begin{scope}[yshift=0.2cm]
        \node[inner sep=0pt, anchor=south west, opacity=0.4] at (0, 0.2) {\includegraphics[width=1.2cm, height=1.2cm]{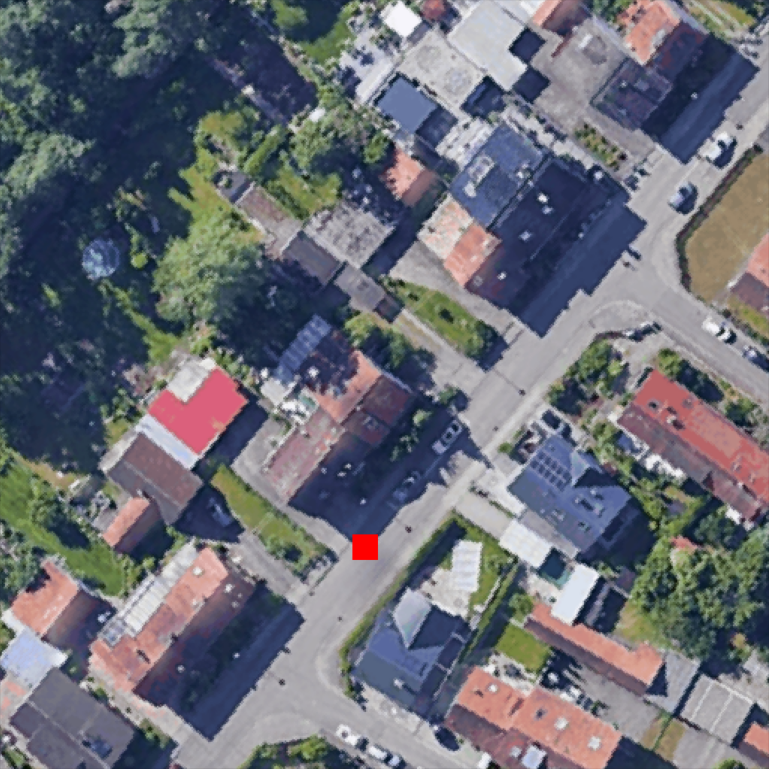}};
        \draw[thick, black!80, rounded corners=1pt] (0, 0.2) rectangle (1.2, 1.4);
        
        \coordinate (mini_pose) at (0.3, 0.5);
        \coordinate (mini_pixel) at (0.8, 1.0);
        
        \draw[gray!80, thick, dashed, ->] (mini_pose) -- ++(0:0.6); 
        \draw[orange!80!red, thick, dashed, ->] (mini_pose) -- ++(15:0.6); 
        \draw[blue!70!black, thick, ->] (mini_pose) -- (mini_pixel); 
        
        \draw[orange!80!red, thick] (0.45, 0.5) arc (0:15:0.2);
        \draw[blue!70!black, thick] (0.5, 0.55) arc (15:45:0.25);
        
        \fill[orange!80!red] (mini_pose) circle (1.8pt);
        \fill[blue!70!black] (mini_pixel) circle (1.8pt) node[inner sep=1pt] (real_pixel) {};
        
        \coordinate (z_bl) at (0.1, 0.3);
        \coordinate (z_tr) at (1.0, 1.2);
        \draw[gray, thick] (z_bl) rectangle (z_tr);
        \draw[red!80, thick, dotted] (z_bl) rectangle (z_tr);
        
        \draw[red!60, thick, dotted] (1.0, 1.2) -- (1.4, 2.2);
        \draw[red!60, thick, dotted] (1.0, 0.3) -- (1.4, 0.0);

        \begin{scope}[shift={(1.5, 0)}]
            \draw[red!30, fill=black!2, rounded corners=2pt] (-0.1, 0.0) rectangle (2.1, 2.2);
        
            \draw[gray, dashed, ->] (0.2, 0.4) -- (1.9, 0.4) node[below left, font=\tiny, xshift=20pt, yshift=0pt, inner sep=1pt] {Global Ref.};
            
            \coordinate (pose) at (0.4, 0.4);
            \fill[orange!80!red] (pose) circle (2pt) node[inner sep=0pt] (b_pose_node) {};
            \node[font=\tiny, orange!80!red, anchor=north] at (0.5, 0.35) {Pose $(x,y)$};
            
            \draw[orange!80!red, thick, dashed, ->] (pose) -- ++(15:1.5) node[text=orange, font=\scriptsize, anchor=north west, xshift=-14pt, yshift=10pt, inner sep=1pt] {Front};
            
            \coordinate (pixel) at (1.4, 1.4);
            \fill[blue!70!black] (pixel) circle (2.5pt) node[inner sep=0pt] (b_pixel_node) {};
            \node[font=\tiny, blue!70!black, anchor=south, xshift=-10pt, yshift=2pt, inner sep=3pt] at (pixel) {BEV Pixel$(u,v)$};
            
            \draw[blue!70!black, thick, ->] (pose) -- (pixel);
            
            \draw[orange!80!red, thick, -] (1.0, 0.4) arc (0:15:0.7);
            \node[font=\tiny, orange!80!red, anchor=west] at (1.12, 0.5) {Yaw};
            
            \draw[green!50!black, thick, -] (1.2, 0.6) arc (15:45:0.82);
            \node[font=\tiny, green!50!black, anchor=east, align=center, inner sep=1pt] at (1.05, 1.25) {Rel.\\Yaw};
            
            \draw[black, thick, -] (0.7, 0.4) arc (0:45:0.3);
            \node[font=\tiny, black, anchor=south, align=center, inner sep=1pt] at (0.4, 0.65) {Abs.\\Angle};
        \end{scope}
    \end{scope}
\end{scope}

\draw[red!50!black, thick, dashed, ->, out=30, in=150, overlay] (gc_node.north) to node[below right, font=\tiny, align=center, xshift=10pt, yshift=-1pt] {Matching} (real_pixel.north west);

\draw[black!30, dashed] (7.1, 0.0) -- (7.1, 2.5);

\begin{scope}[shift={(7.3,0)}] 
    \node[lbl] at (1.5, 2.6) {(c) Selected Yaw for Poses};
    
    \begin{scope}[yshift=0.2cm]
        \draw[line width=10pt, blue!20] (0.2, 0.2) -- (2.0, 2.0);
        \draw[thick, blue!70!black] (0.2, 0.2) -- (2.0, 2.0);
        
        \node[font=\tiny, blue!70!black, rotate=45, anchor=north west] at (1.45, 1.35) {Radial Direction};
        
        \coordinate (p1) at (0.9, 0.9);
        \coordinate (p2) at (1.3, 1.3);
        \coordinate (p3) at (1.7, 1.7);
        \fill[blue!70!black] (p1) circle (1.5pt);
        \fill[blue!70!black] (p2) circle (1.5pt) node[inner sep=0pt] (c_pixel_node) {};
        \fill[blue!70!black] (p3) circle (1.5pt);
        
        \node[font=\tiny, blue!70!black, anchor=east, align=center] at (0.65, 2.0) {Voting\\Pixels};
        \draw[blue!70!black, thin, ->, shorten >=1pt] (0.5, 1.9) -- (p1);
        \draw[blue!70!black, thin, ->, shorten >=1pt] (0.6, 2.0) -- (p2);
        \draw[blue!70!black, thin, ->, shorten >=1pt] (0.7, 2.05) to[out=0, in=135] (p3);
        
        \coordinate (pose1) at (0.3, 0.3);
        \fill[orange!30, opacity=0.6] (pose1) -- ++(-30:0.35) arc (-30:60:0.35) -- cycle; 
        \draw[orange!60!red, thin] (pose1) -- ++(-30:0.35);
        \draw[orange!60!red, thin] (pose1) -- ++(60:0.35);
        \fill[orange!80!red] (pose1) circle (1.5pt);
        \draw[orange!80!red, thick, ->] (pose1) -- ++(15:0.4);
        
        \coordinate (pose2) at (0.6, 0.6);
        \fill[orange!30, opacity=0.6] (pose2) -- ++(-30:0.35) arc (-30:60:0.35) -- cycle;
        \draw[orange!60!red, thin] (pose2) -- ++(-30:0.35);
        \draw[orange!60!red, thin] (pose2) -- ++(60:0.35);
        \fill[orange!80!red] (pose2) circle (1.5pt) node[inner sep=0pt] (c_pose_node) {};
        \draw[orange!80!red, thick, ->] (pose2) -- ++(15:0.4);
        
        \node[font=\tiny, orange!80!red, anchor=north] at (0.8, 0.05) {Poses on Line};
        
        \coordinate (bad1) at (0.2, 1.5);
        \draw[black!30, dashed, ->] (bad1) -- (p1);
        \draw[black!30, dashed, ->] (bad1) -- (p2);
        \draw[black!30, dashed, ->] (bad1) -- (p3);
        
        \foreach \ang in {-75, -40, -20} {
            \fill[gray!30, opacity=0.3] (bad1) -- ++(\ang-45:0.35) arc (\ang-45:\ang+45:0.35) -- cycle;
            \draw[gray!80, thin] (bad1) -- ++(\ang-45:0.35);
            \draw[gray!80, thin] (bad1) -- ++(\ang+45:0.35);
        }
        \fill[gray] (bad1) circle (1.5pt);
        \draw[gray, thick, ->] (bad1) -- ++(-75:0.4); 
        \draw[gray, thick, ->] (bad1) -- ++(-40:0.4); 
        \draw[gray, thick, ->] (bad1) -- ++(-20:0.4); 
        
        \coordinate (bad2) at (1.8, 0.4);
        \draw[black!30, dashed, ->] (bad2) -- (p1);
        \draw[black!30, dashed, ->] (bad2) -- (p2);
        \draw[black!30, dashed, ->] (bad2) -- (p3);
        
        \foreach \ang in {115, 85, 60} {
            \fill[gray!30, opacity=0.3] (bad2) -- ++(\ang-45:0.35) arc (\ang-45:\ang+45:0.35) -- cycle;
            \draw[gray!80, thin] (bad2) -- ++(\ang-45:0.35);
            \draw[gray!80, thin] (bad2) -- ++(\ang+45:0.35);
        }
        \fill[gray] (bad2) circle (1.5pt);
        \draw[gray, thick, ->] (bad2) -- ++(115:0.4); 
        \draw[gray, thick, ->] (bad2) -- ++(85:0.4); 
        \draw[gray, thick, ->] (bad2) -- ++(60:0.4); 
        
        \node[font=\tiny, gray, align=center, anchor=north] at (1.8, 0.45) {Divergent Votes\\(Out of Line)};
    \end{scope}
\end{scope}

\draw[red!50!black, dashed, ->, out=20, in=160, overlay] (b_pixel_node) to (c_pixel_node);

\draw[red!50!black, dashed, ->, out=-30, in=210, overlay] (b_pose_node) to (c_pose_node);

\end{tikzpicture}
\caption{\textbf{Yaw Voting Mechanism.} \textbf{(a)} A ground column defines a specific relative yaw. \textbf{(b)} For any candidate pose, the vector toward its matched BEV pixel defines an absolute angle. The resulting true yaw is calculated by subtracting the relative yaw from this absolute angle. \textbf{(c)} A matched BEV pixel casts yaw votes for candidate poses. Poses geometrically aligned on the correct radial direction consistently vote for the exact same true yaw (orange arrows), while poses out of line compute divergent, inconsistent yaws across different pixel matches (gray arrows).}
\label{fig:angle_est}
\end{figure*}
\paragraph{Problem Setup.}
We consider the following scenario: given a ground-level image and a Bird’s Eye View (BEV) image, the 2D camera location has noise (e.g., on the order of $\pm20\,\mathrm{m}$), while the yaw angle is unknown (up to $\pm180^\circ$). Our objective is to estimate yaw. 
Unlike full 3-DoF cross-view localization~\citep{shi2019optimal}, which jointly estimates 2D location and yaw, our method decouples yaw robustness from location accuracy through Proposition~\ref{prop:radial}'s radial invariance: enabling sub-degree yaw precision without accurate camera position.

Although cross-view localization has advanced significantly, existing methods primarily focus on estimating translation, while yaw is assumed to be approximately known or perturbed only slightly (e.g., within $\pm10^\circ$). Moreover, many approaches tightly couple translation and rotation, relying on point-to-point correspondences with height-dependent projections~\citep{shi_2022CVPR_beyond, Lentsch_2023SliceMatch, Wang_2024OVCL} (Fig.~\ref{fig:comparison_baseline_ours}-a) that break down on slopes or in urban clutter.

We argue that yaw estimation should instead be isolated as an independent subproblem. If the camera position were perfectly known, a single-point correspondence would determine the yaw. However, under position uncertainty, a stronger geometric primitive is required to determine the yaw. To this end, we introduce \textbf{\ourmethod (\ourmethodfullname)}, which formulates cross-view yaw estimation as a \emph{line alignment problem} rather than point-level matching.

\ourmethod~achieves this by vertically aggregating ground image pixels into column features. This inherently removes any dependence on ground height or camera elevation, gracefully bypassing the ground height assumptions of prior work. By matching one such ground column to a set of BEV pixels, we define a radial line (Fig.~\ref{fig:comparison_baseline_ours}-b). This column-to-line alignment is \emph{radially invariant}: all candidate camera positions along this ray share the exact same absolute direction to the BEV pixel. This yields a consistent yaw estimate regardless of location uncertainty along that line (formalized in Proposition~\ref{prop:radial}).

This geometric invariance motivates a robust voting strategy (Fig.~\ref{fig:angle_est}): for each candidate position, the match score between a BEV pixel and a ground column votes for a yaw bin determined by their absolute-to-relative angle geometry. True matches accumulate a strong consensus for the correct yaw, while incorrect matches disperse.
Unlike traditional voting paradigms that accumulate votes directly for geometric elements, our approach votes for yaw, computed from the geometric relationship between matched features, at all 2D positions. We construct a 3D voting tensor (yaw, 2D position) over all candidate poses, where Proposition~\ref{prop:radial} ensures correct matches concentrate at the true yaw along the radial line, enabling yaw recovery without pinpointing position.

Decoupling of yaw estimation from 3-DoF localization has two implications. Importantly, \textbf{yaw robustness is decoupled from location accuracy}: Proposition~\ref{prop:radial} guarantees that location uncertainty is handled through radial voting consensus, enabling sub-degree yaw precision without knowing position. Second, precise yaw estimation serves as a reusable module that enhances downstream 3-DoF pipelines beyond what joint optimization alone achieves.
Experiments on Mapillary Geo-Localization~\citep{sarlin2023orienternet}, Ford Multi-AV~\citep{Ford-Multi-AV}, KITTI~\citep{Geiger2013KITTI}, and VIGOR~\citep{zhu2021vigor} confirm the effectiveness of \ourmethod, consistently outperforming state-of-the-art baselines by large margins.

\paragraph{Contributions.}  
This paper makes the following contributions:  
\begin{itemize}
    \item We \textbf{formulate} \ourmethod, achieving sub-degree yaw precision under $\pm$20m location uncertainty and $\pm$180$^\circ$ yaw ambiguity, without flat ground or camera height assumptions.
    \item We \textbf{introduce} a line alignment framework integrating column-wise feature extraction, ground-BEV matching, and pairwise yaw voting, decoupling yaw from location through geometric consensus.
    \item We \textbf{demonstrate} state-of-the-art improvements on four datasets, establishing yaw as a tractable subproblem in global pose estimation that further boosts downstream localization.
\end{itemize}
\section{Related Works}
\label{sec:related_works}
\paragraph{Vision-based Global Positioning System}
In early days~\citep{geoloc_survey}, methods localized the image coarsely with scale~\citep{Weyand2016PlaNetP, Hays:2008:im2gps}. Advanced localization uses feature databases~\citep{2022navvis} (e.g., Google Map's Visual Positioning System~\citep{2024vpsreview}) or city-scale Structure from Motion~\citep{buildingrome, worldwidepose} with SLAM~\citep{scalable6-dof}. These provide accurate localization but require costly, large-scale feature databases. Alternatively, publicly available 2D maps are utilized~\citep{sarlin2023orienternet, 2024maplocnet}, with multi-modal learning to construct a semantic map~\cite{sarlin2023snap}. 2D maps provide rich semantics, such as roads and buildings, but they lack detailed visual cues for localization.

\paragraph{Visual Camera Rotation Estimation}
Many rotation-focused pose estimation methods are developed for applications such as Unmanned Aerial Vehicle operations~\citep{liu2022visual}, where orientation is critical. Rotation is often part of 6-DoF pose estimation, as location and rotation are typically inseparable in localization. Some focus on frame-to-frame rotation using omnidirectional cameras~\citep{mlpcompass2024} or handheld cameras in crowded scenes~\citep{2023iccvcamrot}. For 2D rotation, geometry-aware methods predict the horizon line from a single image~\citep{Xian2019UprightNetGC}, and roll and pitch estimation using camera parameters has been proposed~\citep{jin2023perspective}. Unlike yaw, local features can estimate roll and pitch by using surrounding structural cues to align images with the direction of gravity. In a cross-view setup, rotation estimation is primarily in 2D by using gravity-aligned images.

\paragraph{Ground and Aerial Cross-View Localization}
Cross-view localization matches a ground-level query to geo-referenced aerial data by bridging the viewpoint gap. Early methods retrieved locations from sparse candidates~\citep{NIPS2019_shi, Liu2019lending, shi2019optimal, 2022softgeo, Shi_2022_ACCV_CVLNet, zhu2022transgeo}. Orientation estimation was introduced by applying a polar transform to the satellite image centered at a known position~\citep{2020shi}, and by matching non-aligned images~\citep{zhu2021vigor}. Evolved methods estimate precise positions~\citep{xia2022visual} and yaw in BEV by matching deep features~\citep{shi_2022CVPR_beyond}, often assuming a perspective transform at a specific ground height~\citep{Fervers_2023_CVPR_CSLA, Song_2023_LearningFlow, Wang_2023PureACL, Wang_2024OVCL}. Techniques include horizontally splitting images~\citep{Lentsch_2023SliceMatch}, rolling descriptors~\citep{xia2023_CCVPE}, refining homography between BEV and projected ground images~\citep{Song_2023_LearningFlow, NEURIPS2023_homography}, and iterative pose refinement~\citep{Wang_2023PureACL, Wang_2024OVCL}. OrienterNet~\citep{sarlin2023orienternet} projects ground features onto a 2D map and scores each 3-DoF pose via cross-correlation, matching \emph{what is at a projected location}; this relies on a projection model for ground-to-BEV mapping. Recent method~\citet{Xia_2025_CVPR_FG2} maps the features of the ground view to a 3D grid and uses Procrustes analysis to estimate the pose. A line of works~\citet{shi_2022CVPR_beyond, Shi_2023Boosting, g2sweakly_2024_shi} estimate yaw from joint regression or 3-DoF optimization, focusing on slight yaw noise in driving scenes. We propose a yaw estimate that matches \emph{the direction a feature lies in} rather than the projected location, without requiring known position or ground height for ground-to-BEV mapping.
\begin{figure}[!ht]
\setlength{\abovecaptionskip}{0pt}
    \setlength{\belowcaptionskip}{0pt}
    \includegraphics[width=\linewidth]{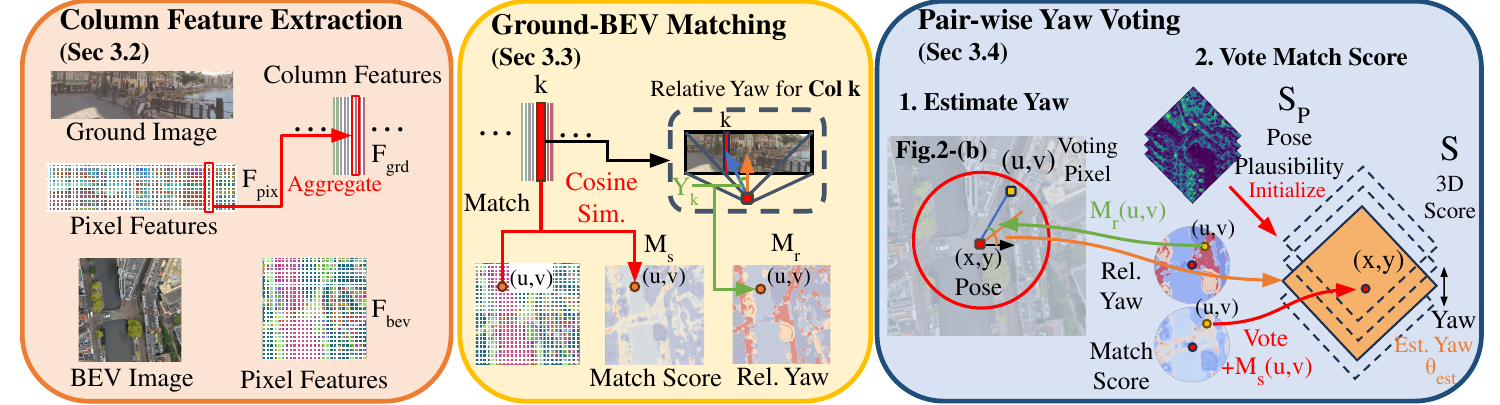}
    \caption{Overview of the \ourmethod~framework. \textbf{(a)} U-Net encoders process ground and BEV images. Ground pixel features are aggregated column-wise to produce column features $\mathbf{F}_\text{grd}$. \textbf{(b)} Each BEV pixel is matched to the most similar ground column, producing a match score $\mathbf{M_s}$ and relative yaw $\mathbf{M_r}$. \textbf{(c)} For each pair of 2D pose and BEV pixel, match scores are voted into yaw bins based on the geometric relationship between absolute and relative angles. (as in Fig.~\ref{fig:angle_est}-b) The output yaw is the yaw from the highest scoring bin $\theta~\text{from}~S(\theta, (x, y))$.}
    \label{fig:architecture}
\end{figure}

\section{Method}
\subsection{Overview}
\label{sec:overview}

\ourmethod~framework is illustrated in Fig.~\ref{fig:architecture}. We follow standard cross-view localization input. \ourmethod~takes as input an undistorted ground-level image \( \text{I}_{grd} \in \mathbb{R}^{3 \times H_g \times W_g} \), nearly gravity-aligned along the $y$ axis, and a BEV image \( \text{I}_{bev} \in \mathbb{R}^{3 \times H_b \times W_b} \), a top-down view that spatially covers the surrounding area. The camera intrinsics are given. Importantly, \textbf{our formulation does not assume ground-truth location; the 2D pose $(x, y)$ remains unknown and is exhaustively tested} over all candidate positions within the noise range. The goal is to estimate yaw, formulated to output a 2D position-conditioned discrete yaw score bins $S(\theta, (x, y))$, a 3D array $S \in \mathbb{R}^{|\Theta| \times |\mathcal{X}| \times |\mathcal{Y}|}$, where \( (x, y) \) represents the discrete 2D position in BEV coordinates, and \( \theta \) denotes the yaw. The yaw from the highest scoring 3D bin is the estimated yaw.

Our relative yaw-based formulation is visualized in Fig.~\ref{fig:angle_est}. Each ground column defines a relative yaw from the camera's front (Fig.~\ref{fig:angle_est}-a). For any candidate 2D pose, the yaw is estimated by subtracting the column's relative yaw from the absolute angle to the matched BEV pixel (Fig.~\ref{fig:angle_est}-b). A single match provides only a point-wise vote; radial lines emerge from consensus. Distinctive structures (roads, trees, building edges) span multiple co-radial BEV pixels sharing the same matched column, and accumulate votes for the same yaw along the radial line regardless of the camera's position along that direction (Fig.~\ref{fig:angle_est}-c).

\begin{proposition}[Position-Invariant Yaw from Line Correspondence]
\label{prop:radial}
Let a BEV pixel $(u,v)$ match to the ground column $k$ with relative yaw $\mathbf{Y}(k)$. For any two candidate camera pose $(x_1, y_1)$ and $(x_2, y_2)$ that are collinear with $(u,v)$, the estimated yaw $\theta_{\mathrm{est}} = \arctan\!\big(\tfrac{v-y}{u-x}\big) - \mathbf{Y}(k)$ is identical. (Proof in appendix.)
\end{proposition}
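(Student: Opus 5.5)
The plan is to reduce the statement to the fact that the direction from a candidate pose to $(u,v)$ depends only on the line, not on where the pose sits on it. The relative-yaw term is the easy half. By construction, $\mathbf{Y}(k)$ is a function of the ground column index $k$ alone, fixed by the camera intrinsics and the column's horizontal offset. Since the match $(u,v)\mapsto k$ does not involve the candidate position, $\mathbf{Y}(k)$ is the same for both poses. It then suffices to show that $\arctan\!\big(\tfrac{v-y}{u-x}\big)$ takes the same value at $(x_1,y_1)$ and $(x_2,y_2)$.

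For the angular term I would parametrize the line. Both candidates are collinear with $(u,v)$ and distinct from it (a pose at the pixel itself gives no direction, so I exclude it). Hence there is a unit direction vector $\mathbf{d}=(d_x,d_y)$ and nonzero scalars $t_1,t_2$ with $(u,v)-(x_i,y_i)=t_i\mathbf{d}$ for $i=1,2$. Substituting gives
\[
\frac{v-y_i}{u-x_i}=\frac{t_i d_y}{t_i d_x}=\frac{d_y}{d_x},
\]
which is independent of $i$. Applying $\arctan$ yields identical values, and subtracting the common $\mathbf{Y}(k)$ gives identical $\theta_{\mathrm{est}}$. I would treat the degenerate case $d_x=0$ separately. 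There the line is vertical in BEV coordinates, both ratios are undefined in the same way, and under the usual convention both angles equal $\pm\pi/2$.

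The step I expect to need the most care is the quadrant convention. With the literal single-argument $\arctan$ the computation above is complete. However, the method in Fig.~\ref{fig:angle_est} uses the full absolute angle, i.e.\ $\operatorname{atan2}(v-y,\,u-x)$, so that it can resolve $\pm180^\circ$ yaw. In that form the angle is $\operatorname{atan2}(t_i d_y, t_i d_x)$. This equals $\operatorname{atan2}(d_y,d_x)$ when $t_i>0$ and differs by $\pi$ when $t_i<0$.

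I would therefore add the hypothesis that both poses lie on the same side of $(u,v)$, that is, on the same radial ray, which is the geometric setting of Fig.~\ref{fig:angle_est}-c. Under that hypothesis $t_1,t_2>0$ and the conclusion holds modulo $2\pi$. I would also note that if the two poses lie on opposite sides of $(u,v)$, the two estimates differ by exactly $\pi$.
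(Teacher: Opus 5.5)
Your argument is correct and matches the paper's proof, which likewise notes that collinearity forces $\tfrac{v-y_1}{u-x_1}=\tfrac{v-y_2}{u-x_2}$ and that $\mathbf{Y}(k)$ depends only on the matched column. Your extra care with the vertical-line case and with the $\mathrm{atan2}$ convention is a valid sharpening that the paper's one-line proof leaves implicit: for the full-circle angle the method actually needs, both poses must lie on the same ray from $(u,v)$, otherwise the two estimates differ by $\pi$.
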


This is fundamental to our approach: it \textbf{decouples yaw estimation from camera location}. Without knowing the true 2D position, we vote over all candidate poses; within line, Proposition~\ref{prop:radial} ensures that correct matches vote the same yaw regardless of candidate position (as long as it lies on the corresponding radial line). This enables precise yaw estimation despite location uncertainty. The ground height assumption is removed by the column-wise formulation itself (Sec.~\ref{sec:feature_extraction}): relative yaw depends only on horizontal image position, not on object distance or ground elevation.

Based on these properties, three key components enable precise yaw estimation: \textbf{Column Feature Extraction} (Sec.~\ref{sec:feature_extraction}) extracts column-wise features from the ground image that correspond to unique relative yaw. \textbf{Ground-BEV Matching} (Sec.~\ref{sec:method_matching}) computes feature similarity between ground column and BEV pixel and selects a representative matching ground column for each BEV pixel. \textbf{Pair-wise Yaw Voting} (Sec.~\ref{sec:method_voting}) votes match scores for the estimated yaw for each pair of BEV pixel match and 2D candidate pose to find the most probable yaw. More details are in the appendix.

\subsection{Column Feature Extraction}
\label{sec:feature_extraction}
We extract features from the BEV and ground images for cross-view matching. The ground pixel features are aggregated column-wise, capturing features with unique relative yaw. This relative yaw is later used to estimate yaw from a pair of BEV pixels and 2D pose. Relative yaw is not impacted by ground height or distance, relaxing the ground height assumption.

\paragraph{Pixel Feature Extraction} We employ two U-Net~\citep{unet} with a VGG-16~\citep{VGG-16} encoder to extract $C$-dimensional features for two images following prior works~\citep{Shi_2023Boosting, Wang_2024OVCL}. The BEV feature map is denoted as $\mathbf{F}_\text{bev} \in \mathbb{R}^{C \times H_b \times W_b}$. The ground feature map is extracted at the pixel level $\mathbf{F}_\text{pix} \in \mathbb{R}^{C \times H_g \times W_g}$. We add a 4th channel to the ground image before encoding, the heading embedding~\citet{Wang_2023PureACL}, representing the relative yaw of each pixel.

\paragraph{Column-wise Aggregation} We aggregate features column-wise to obtain a relative yaw-aligned feature. For each column $k$, we weigh and sum the pixel features to extract important features. A fully connected layer $N$ and a softmax compute the pixel feature weights. Given the feature of pixel $(k, j)$ as $\mathbf{F}_\text{pix}(k, j)$: 
\begin{equation}
\mathbf{F}_\text{grd}(k) = 
\sum_{j=1}^{H_g} 
\frac{\exp\big(N(\mathbf{F}_\text{pix}(k, j))\big)}
{\sum_{j'=1}^{H_g} \exp\big(N(\mathbf{F}_\text{pix}(k, j'))\big)} 
\, \mathbf{F}_\text{pix}(k, j)
\end{equation}
This results in a column-wise feature $\mathbf{F}_\text{grd} \in \mathbb{R}^{C \times W_g}$.

\paragraph{Feature Confidence Estimation} We compute confidence scores for both BEV and ground features to focus on distinct and matchable features. MLP attached to U-Net computes confidence from $\mathbf{F}_\text{bev}$ and $\mathbf{F}_\text{pix}$, producing $\mathbf{C}_\text{bev} \in \mathbb{R}^{H_b \times W_b}$ and $\mathbf{C}_\text{pix} \in \mathbb{R}^{H_g \times W_g}$. For each column, the highest pixel confidence is selected as $\mathbf{C}_\text{grd} \in \mathbb{R}^{W_g}$. These confidence scores are used in the matching process to improve robustness by assigning greater weight to more confident features.

\subsection{Ground-BEV Matching}
\label{sec:method_matching}

We compute cosine similarity scores for each pair of a ground column and a BEV pixel. One representative ground column is selected per BEV pixel to ensure an effective match. This process does not use projection. Instead, we retain a relative yaw from the ground column, assigning a match score to each BEV pixel. This is used to vote yaw in Sec.~\ref{sec:method_voting} for each BEV-pixel match and 2D pose.

\paragraph{Match Score Computation} Match score is absolute cosine similarity between the BEV feature at pixel $(u, v)$ and each ground column feature. Given BEV feature $\mathbf{F}_\text{bev}(u, v)$ and $k$-th ground column feature $\mathbf{F}_\text{grd}(k)$, the pairwise match score is:
\begin{equation}
\mathbf{P}((u, v), k) = \left| \mathbf{F}_\text{bev}(u, v) \cdot \mathbf{F}_\text{grd}(k) \right| \cdot \mathbf{C}_\text{grd}(k)
\end{equation}
where $|\cdot|$ denotes the absolute value of the dot product, which yields a scalar since features are $C$-dimensional unit vectors after channel-wise normalization. Outer $\cdot$ denotes the scalar multiplication.

\paragraph{Probabilistic Feature Selection} Column selection differs between training and inference. During training, we use probabilistic selection for the model to learn from diverse column-pixel correspondences, preventing dominant features from overshadowing valid ground-truth matches. In inference, we switch to deterministic (hard) selection to pick the single best match for efficiency.

For each BEV pixel $(u,v)$, we compute a selection distribution over all columns based on their match scores, then sample according to the phase:
\begin{equation}
\text{Selection weights: } w_k = \frac{\mathbf{P}((u,v),k)}{\sum_{k'=1}^{W_g} \mathbf{P}((u,v),k')}
\end{equation}
where the match scores $\mathbf{P}((u,v),k)$ are normalized into a probability distribution across the $W_g$ columns. The chosen column index $\textbf{c}(u,v)$ is then:
\begin{equation}
\textbf{c}(u,v) =
\begin{cases}
\text{sample from } \{1, \ldots, W_g\} \text{ with probabilities } w_k, & \text{if Training}, \\
\argmax\limits_k \mathbf{P}((u, v), k), & \text{if Inference}
\end{cases}
\end{equation}
Training-time randomization mitigates overfitting to dominant features and prevents failure cases in which no feature votes for the ground-truth yaw. Without the absolute value, the model collapses to predicting a negative match score for anti-correlated features in non-matching pairs, which is an easier target. Taking the absolute value closes this shortcut while preserving matching quality, at the cost of discarding feature sign. The final BEV pixel match score is computed as:
\begin{equation}
\mathbf{M_s}(u, v) = \mathbf{C}_\text{bev}(u, v) \cdot \mathbf{P}((u, v), \textbf{c}(u,v))
\end{equation}
where $\mathbf{C}_\text{bev}(u, v)$ and $\mathbf{P}((u, v), \textbf{c}(u,v))$ are scalar confidence and match scores for the pixel (positive by construction), $\cdot$ is a scalar multiplication. No absolute value is required here, since both operands are already nonnegative.

\paragraph{Relative Yaw Mapping} Each ground column $k$ corresponds to a unique relative yaw $\mathbf{Y}(k)$, computed from the 3D ray direction in the camera's coordinate frame. The selected column's relative yaw is assign it to the relative yaw map $\mathbf{M_r}$:
\begin{equation}
\mathbf{M_r}(u, v) = \mathbf{Y}(\textbf{c}(u,v)).
\end{equation}
Each BEV pixel $(u, v)$ has an associated match score $\mathbf{M_s}(u, v)$ and a relative yaw $\mathbf{M_r}(u, v)$. These values are utilized in the Yaw Voting to estimate yaw.

\subsection{Pair-wise Yaw Voting}
\label{sec:method_voting}
Yaw estimation is ambiguous without a known 2D pose $(x,y)$. To resolve this ambiguity, we construct a 3D score bin $S(\theta,(x,y))$ over the full pose search space. We iterate over all candidate 2D poses. For each candidate $(x, y)$, we compute the conditional yaw that aligns each matched BEV pixel $(u,v)$ with the radial line of the ground column, and cast its match score $M_s(u, v)$ into the corresponding yaw bin $S(\theta_{\text{est}}, (x, y))$. This process isolates yaw through a relative yaw-based formulation (Fig.~\ref{fig:angle_est}-(b)), where correct BEV pixel matches sharing the same ground column agree on the same yaw for all $(x,y)$ candidates along that radial direction (Fig.~\ref{fig:angle_est}-(c)). This radial consensus removes the need for height-assuming projection and enables sub-degree yaw estimation under location ambiguity.

\paragraph{Yaw Estimation} The absolute angle $\theta_{\text{abs}}$ in BEV coordinate for each candidate pose $(x, y)$ relative to a matched BEV pixel $(u, v)$ is computed by:
\begin{equation}
\theta_{\text{abs}}((x, y), (u, v)) = \arctan\left(\frac{v - y}{u - x}\right)
\label{eq:yawest}
\end{equation}
The estimated yaw is then computed by subtracting the matched ground-view relative yaw $\mathbf{M_r}(u, v)$ from the absolute angle, as visualized in Fig.~\ref{fig:angle_est}:
\begin{equation}
\theta_{\text{est}}((x, y), (u, v)) = \theta_{\text{abs}}((x, y), (u, v)) - \mathbf{M_r}(u, v)
\end{equation}
Crucially, when the 2D position $(x,y)$ lies on a radial line from pixel $(u,v)$, the absolute angle $\theta_{\text{abs}}$ remains constant: all positions on the same radial line from the camera toward $(u,v)$ share the same angular direction, so $\arctan\left(\frac{v - y}{u - x}\right)$ is invariant. Therefore, $\theta_{\text{est}} = \theta_{\text{abs}} - Y(k)$ is also constant along the entire line. This radial invariance is the key property that enables robust yaw estimation under location uncertainty (Proposition~\ref{prop:radial}): correct matches vote for the same yaw regardless of where the camera actually sits on that radial direction.

\paragraph{Yaw Voting}
To handle location uncertainty, we construct a 3D score tensor that votes over all candidate 2D positions and all yaw angles: $\mathbf{S} \in \mathbb{R}^{|\Theta| \times |\mathcal{X}| \times |\mathcal{Y}|}$, where each bin $(\theta, (x, y))$ accumulates votes. The radial invariance from Proposition~\ref{prop:radial} directly enables this 3D approach. When a BEV-to-column match is evaluated at candidate position $(x_1, y_1)$, it votes for yaw $\theta_1$; the same match evaluated at nearby position $(x_2, y_2)$ on the same radial line votes for the identical yaw $\theta_1$. Thus, correct matches accumulate votes across multiple position bins along the radial line. From multiple BEV matching consensus along a radial line, the true yaw forms a strong peak in the 3D tensor, while incorrect yaws receive scattered votes across disconnected position-angle combinations.

For each candidate pose $(x,y)$, we compute the yaw that each BEV pixel vote contributes to (Equation~\ref{eq:yawest}) and accumulate match score into the corresponding bin. The match score $\mathbf{M_s}(u, v)$ is accumulated to the corresponding yaw bin of $\theta$ for each candidate pose $(x,y)$, if $(u,v)$ is within distance $R$ from $(x,y)$:
\begin{equation}
\mathbf{S_M}(\theta, (x,y)) =
\sum_{u, v} \mathbf{M_s}(u, v) \cdot 
\mathbb{I}\!\Big( \theta = [\theta_{\mathrm{est}}((x, y), (u, v))] \,\wedge\, d((x,y),(u,v)) \le R \Big),
\end{equation}
where $\mathbb{I}$ is an indicator function ensuring voting only for matching yaw bins, $d((x,y),(u,v)) = \sqrt{(x-u)^2 + (y-v)^2}$. We set $R$ to the largest radius such that a circle of radius $R$ centered at any candidate pose fits entirely within the BEV image; this prevents boundary poses from receiving fewer votes. (details in the appendix) Although the discrete bin assignment is non-differentiable, the match scores $\mathbf{M_s}$ accumulated in each bin are fully differentiable, allowing gradients to flow back to the feature extractors and confidence estimators.

\paragraph{Pose Plausibility} Different poses have different likelihoods due to the presence of roads and structures. We add a pose plausibility score $\mathbf{S_P}(\theta, (x, y))$ derived from BEV features using an MLP that provides a hint for estimation, inspired by prior work~\citet{Shi_2023Boosting}
\begin{equation}
\mathbf{S}(\theta, (x, y)) = \mathbf{S_P}(\theta, (x, y)) + \mathbf{S_M}(\theta, (x, y))
\end{equation}

\paragraph{Multi Level Score} We use a 3-level estimate, with different resolutions~\citet{Shi_2023Boosting}. One lower level has half the resolution, with a halved number of angular bins. Each level uses a per-pixel 2D pose, and the highest level uses a 1-degree angle bin. Lower-resolution scores are interpolated and added to higher-resolution scores. The highest-resolution is used for the final estimate.

\paragraph{Loss Function} Applying softmax over the final score tensor $\mathbf{S}$ for each resolution, we maximize the log probability of the ground-truth pose bin. For ground-truth pose $(x_\text{gt}, y_\text{gt}, \theta_\text{gt})$:
\begin{equation}
\mathcal{L} = -\log\frac{\exp\big(\mathbf{S}(x_\text{gt}, y_\text{gt}, \theta_\text{gt})\big)}
{\sum_{x \in \mathcal{X}} \sum_{y \in \mathcal{Y}} \sum_{\theta \in \Theta} \exp\big(\mathbf{S}(x, y, \theta)\big)}
\end{equation}
\begin{figure}[!ht]
\setlength{\abovecaptionskip}{0pt}
    \setlength{\belowcaptionskip}{0pt}
    \includegraphics[width=\linewidth]{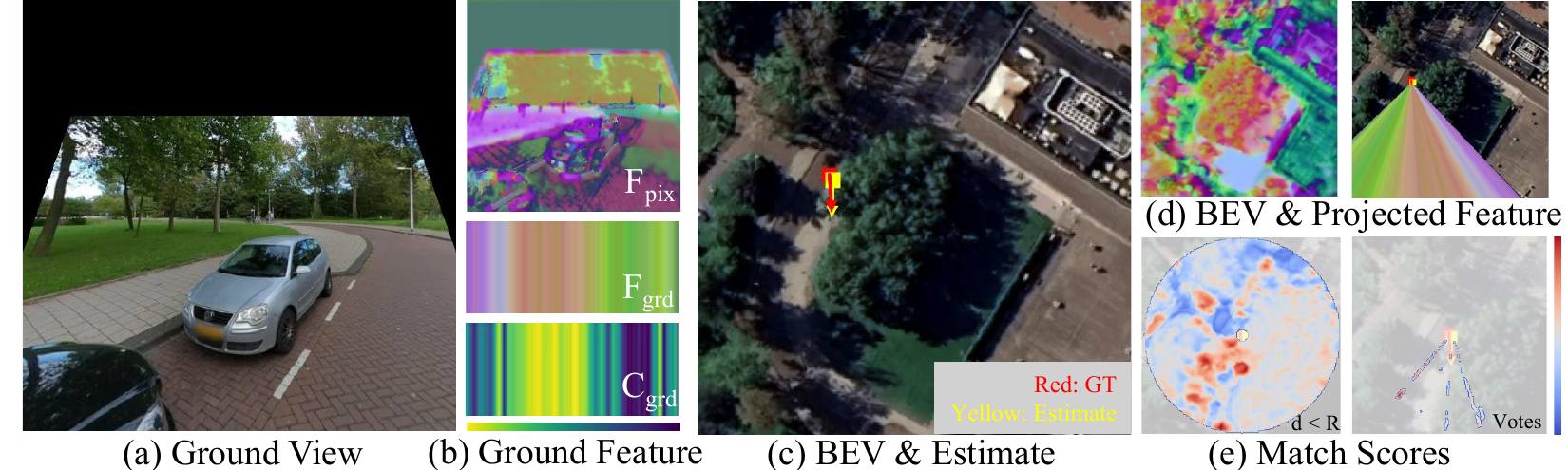}
    \caption{Visualization of \ourmethod~in the MGL dataset. PCA visualizes feature space. \textbf{(a)} is the ground view. \textbf{(b)} are the ground pixel, column features, and confidence: $\textbf{F}_\text{pix}$, $\textbf{F}_\text{grd}$, and $\textbf{C}_\text{grd}$, with a colormap for confidence. \textbf{(c)} is the BEV image with the estimation result. \textbf{(d)} are the BEV and the projected ground feature for the estimate pose (for visualization, not used by our method). \textbf{(e)} are match scores $M_s$ within an $R$ distance of the estimated pose and ones that vote for the estimated yaw, with a color map for the voted score scale. Confident votes are concentrated on a line.
    \textbf{More examples from all datasets are in the appendix.}}
    \label{fig:visualize}
\end{figure}
\section{Experiments}
\subsection{Experiment Setup and Metrics}
To evaluate the method under representative real-world conditions in which GPS and sensor data can be noisy, we perturb the ground-truth poses in both position and orientation, following the protocols of prior work~\citep {xia2023_CCVPE, Shi_2023Boosting}. Yet, under location uncertainty, we focus on assessing the angular accuracy for yaw.

The angular accuracy metric, commonly used for prior works~\cite{Shi_2023Boosting, xia2023_CCVPE, Wang_2023PureACL}, is the ratio of the correct yaw estimate, where the estimation is correct if the yaw angle error is less than $\theta$ degrees relative to the ground truth. Consistent with previous studies~\citep{Wang_2023PureACL, Wang_2024OVCL}, we use thresholds of $\theta = 1^\circ$, $2^\circ$, and $4^\circ$.

\subsection{Baselines}
We compare \ourmethod~against four state-of-the-art baselines with official code: CCVPE~\citep{xia2023_CCVPE}, BoostAcc~\citep{Shi_2023Boosting}, G2S~\citep{g2sweakly_2024_shi}, and FG2~\citep{Xia_2025_CVPR_FG2}. CCVPE utilizes rolling descriptors with orientation-map-based pose estimation. BoostAcc and G2S utilize an optimizer and Spatial Transformer Networks to estimate 3-DoF poses, taking only yaw to compute correlations for 2-DoF poses. G2S uses self-supervised learning with only BEV for yaw estimation. FG2 estimates a 3-DoF pose jointly by weighted matching of the 3D grid-mapped ground view and the BEV, using Procrustes analysis.

All baselines and our method use identical training data, splits, and evaluation protocol. The baselines originally report only a small yaw error (e.g., $\pm10^\circ$), except for VIGOR, where an omnidirectional camera enables estimation of the unknown yaw. Official codes are used to retrain the baselines for unknown yaw setups, with dataloaders adapted accordingly. Some baselines already have the KITTI dataloader; we only changed the rotation error range. For the VIGOR dataset, we use their public weights for evaluation, unless noted as retrained due to unavailability. Despite using official implementation, most methods struggled with a common limited-FoV camera with unknown yaw. We clarify details in the appendix on how each method-dataset pair is adapted.

\subsection{Dataset}
\textbf{MGL~(Mapillary Geo-Localization) Dataset}~\citep{sarlin2023orienternet}\footnote{All images are publicly available under a CC-BY-SA license via the Mapillary API~\citep{sarlin2023orienternet}.} is a crowd-sourced dataset with ground images from vehicles, handheld cameras, and bicycles. Many views are not front-aligned with the road direction, introducing a common challenge for AR/MR applications where orientation alignment is crucial. SfM-refined poses are provided with gravity-aligned images. We augment the Amsterdam split with Google Maps, which provides ground view at a constant resolution consistent with cross-view localization datasets (27,159 train / 9,103 test, 512$\times$512). Further details in the appendix.

\textbf{KITTI Dataset}~\citep{Geiger2013KITTI} is a widely used cross-view localization benchmark with one training set and two test sets (same-area and cross-area). Following prior work~\citet{Shi_2023Boosting}, images are resized to match the Ford dataset. Up to $\pm20$m location noise and $\pm180^{\circ}$ yaw noise are injected.

\textbf{Ford Multi-AV Dataset}~\citep{Ford-Multi-AV} evaluates driving scenes with satellite imagery from Google Map~\citep{Shi_2023Boosting}. We use log~1 (highway), where small angular errors are critical. We follow prior works~\citet{Shi_2023Boosting}, using their splits, ground images are resized to 256$\times$1024, and satellite images are cropped to 512$\times$512. Consistent with prior work~\cite{Wang_2024OVCL}, up to $\pm20$m location noise and $\pm45^{\circ}$ yaw noise is injected.

\textbf{VIGOR Dataset}~\citep{zhu2021vigor} contains ground panorama images of four cities. We follow standard fine-grained protocols~\citep{Lentsch_2023SliceMatch, xia2023_CCVPE} using positive samples and locational noise (center half width and height of BEV). Two splits are used: same-area (all cities for train/test) and cross-area (two cities each). We use the unknown orientation setup with $\pm180^{\circ}$ yaw uncertainty.

\subsection{Results}
\label{sec:main_results}

Table~\ref{tab:comparison_mgl} is the evaluation result for the \textbf{MGL} dataset, where urban variability introduces height ambiguity that projection-based methods cannot handle. \ourmethod~achieves 72.10\% sub-degree accuracy under $\pm45^{\circ}$ noise, doubling G2S (32.67\%). Unknown yaw is particularly challenging, as methods are typically evaluated with slight angular noise unless given a $360^{\circ}$ panoramic image (e.g., VIGOR). Existing methods largely fail, while our method achieves 34.81\% sub-degree accuracy versus 6.55\% for CCVPE.

Table~\ref{tab:comparison_kitti} shows results on the \textbf{KITTI} dataset with $\pm180^{\circ}$ yaw noise. Most methods perform well under small noise ($\pm10^{\circ}$) but break down when the yaw is unknown. Our method achieves 51.02\% and 48.46\% sub-degree accuracy on same- and cross-area splits, while CCVPE achieves 8.96\% and 3.14\%. Performance remains consistent across areas, indicating robustness to unseen regions. Finer-grained $1$--$5^{\circ}$ threshold accuracies for MGL and KITTI are reported in the supplementary material.

Table~\ref{tab:comparison_ford} is the \textbf{Ford Multi-AV} highway dataset result. \ourmethod~achieves 67.05\% sub-degree accuracy, improving 25.69\% over the next best method. Following prior work, it is trained and tested without accounting for pitch and roll, yet stays robust to sensory noise.

\begin{table}[!ht]
\centering
\caption{Comparison of cross-view yaw estimation accuracy on MGL Dataset with $\pm$20m location noise under $\pm45^{\circ}$ and $\pm180^{\circ}$ yaw noise.
}
\label{tab:comparison_mgl}
\begin{tabular}{|l|ccc|ccc|}
\hline
\multirow{2}{*}{Method} & \multicolumn{3}{c|}{$\pm45^{\circ}$} & \multicolumn{3}{c|}{$\pm180^{\circ}$} \\
 & $<1^{\circ}$ & $<2^{\circ}$ & $<4^{\circ}$ & $<1^{\circ}$ & $<2^{\circ}$ & $<4^{\circ}$ \\
\hline
CCVPE~\citep{xia2023_CCVPE} & 21.36 & 40.78 & 68.15 & 6.55 & 12.55 & 23.79 \\
BoostAcc~\citep{Shi_2023Boosting} & 7.40 & 14.19 & 27.54 & 0.58 & 1.09 & 2.13 \\
G2S~\citep{g2sweakly_2024_shi} & 32.67 & 57.49 & 82.50 & 0.59 & 1.27 & 1.86 \\
FG2~\citep{Xia_2025_CVPR_FG2} & 18.59 & 34.97 & 59.20 & 3.13 & 6.28 & 12.44 \\
\textbf{Ours} & \textbf{72.10} & \textbf{90.16} & \textbf{95.63} & \textbf{34.81} & \textbf{52.42} & \textbf{61.74} \\
\hline
\end{tabular}
\end{table}

\begin{table}[!ht]
\centering
\caption{Comparison of cross-view yaw estimation accuracy on KITTI dataset with $\pm$20m location noise and unknown yaw. *: CCVPE reports accuracy at $<1^{\circ}$, $<3^{\circ}$, and $<5^{\circ}$ thresholds~\citep{xia2023_CCVPE}; $\le$ entries denote that the published value is from a looser threshold ($<3^{\circ}$ for $<2^{\circ}$, $<5^{\circ}$ for $<4^{\circ}$), so true accuracy is equal or lower.}
\label{tab:comparison_kitti}
\begin{tabular}{|l|ccc|ccc|}
\hline
\multirow{2}{*}{Method} & \multicolumn{3}{c|}{Test 1 (Same Area)} & \multicolumn{3}{c|}{Test 2 (Cross Area)} \\
 & $<1^{\circ}$ & $<2^{\circ}$ & $<4^{\circ}$ & $<1^{\circ}$ & $<2^{\circ}$ & $<4^{\circ}$ \\
\hline
CCVPE*~\citep{xia2023_CCVPE} & 8.96 & $\le$26.48 & $\le$42.75 & 3.14 & $\le$9.24 & $\le$14.56 \\
BoostAcc~\citep{Shi_2023Boosting} & 0.53 & 0.90 & 1.96 & 0.56 & 1.05 & 2.09 \\
G2S~\citep{g2sweakly_2024_shi} & 0.58 & 1.17 & 2.17 & 0.54 & 0.82 & 1.64 \\
FG2~\citep{Xia_2025_CVPR_FG2} & 1.62 & 3.15 & 6.47 & 1.62 & 2.96 & 6.13 \\
\textbf{Ours} & \textbf{52.98} & \textbf{65.07} & \textbf{67.72} & \textbf{49.59} & \textbf{58.92} & \textbf{60.25} \\
\hline
\end{tabular}%
\end{table}

\begin{table}[!ht]
\centering
\caption{Comparison of cross-view yaw estimation accuracy on VIGOR Dataset (omnidirectional camera), $\pm0.25$ BEV width and height location noise, and unknown yaw.
*:~BoostAcc did not evaluate on VIGOR.
**:~G2S evaluated only the translation estimator on VIGOR.
$\dagger$: FG2 provides VIGOR result with a two-stage approach.}
\label{tab:comparison_vigor}
\begin{tabular}{|l|ccc|ccc|}
\hline
\multirow{2}{*}{Method} & \multicolumn{3}{c|}{Same Area} & \multicolumn{3}{c|}{Cross Area} \\
 & $<1^{\circ}$ & $<2^{\circ}$ & $<4^{\circ}$ & $<1^{\circ}$ & $<2^{\circ}$ & $<4^{\circ}$ \\
\hline
CCVPE~\citep{xia2023_CCVPE} & 8.31 & 16.69 & 32.40 & 8.96 & 17.66 & 34.34 \\
BoostAcc~\citep{Shi_2023Boosting}* & 0.50 & 1.10 & 2.16 & 0.64 & 1.20 & 2.30 \\
G2S~\citep{g2sweakly_2024_shi}** & 2.33 & 4.60 & 8.92 & 3.98 & 7.72 & 14.45 \\
FG2~\citep{Xia_2025_CVPR_FG2} & 19.05 & 35.94 & 58.91 & 9.88 & 19.37 & 35.57 \\
FG2~\citep{Xia_2025_CVPR_FG2}$\dagger$ & 20.78 & 38.17 & \textbf{62.11} & 12.39 & 23.48 & 41.40 \\
\textbf{Ours} & \textbf{34.16} & \textbf{50.27} & 59.32 & \textbf{23.67} & \textbf{37.09} & \textbf{47.15} \\
\hline
\end{tabular}
\end{table}

Table~\ref{tab:comparison_vigor} shows results on the \textbf{VIGOR} dataset with omnidirectional ground images. The panoramic view provides more information, so prior methods remain competitive.
At the $<4^{\circ}$ threshold in the same area, FG2's two-stage variant slightly outperforms ours, as our voting concentrates scores in narrow angular peaks optimized for sub-degree precision. At the stricter $<1^{\circ}$ and $<2^{\circ}$ thresholds, \ourmethod~leads by 13.38\% and 12.10\%, and at all cross-area thresholds.

\subsection{Analysis}
\subsubsection{Matching Process and Location}
\label{sec:analysis}
Fig.~\ref{fig:visualize} and appendix examples across all datasets illustrate the matching process. High-scoring matches arise along short co-linear pixels and vote the same yaw for all locations along the radial line~(as in Fig.~\ref{fig:angle_est}-c). The voting is agnostic to the physical nature of matched structures; appendix visualizations confirm correspondences among road segments, tree branches, and building facades across diverse scenes. Overall, our formulation distributes scores along a radial line rather than concentrating them at a single pixel. This is by design: the method learns the most confident line correspondences for yaw. Our method can instead support 3-DoF localization methods as shown in Sec.~\ref{sec:angular_prior}, reducing their search space to 2-DoF.

Fig.~\ref{fig:match_evidence_vigor} traces a single VIGOR cross-area prediction through the matching pipeline. Different columns focus on different cues: two lock onto road segments and produce the largest match scores, one onto a tree, and one onto a grass patch in front of a building. The matching-only $S_M$ curve peaks at the correct yaw, with a smaller peak near the opposite ($180^{\circ}$) direction.

\begin{figure}[t]
    \centering
    \includegraphics[width=0.95\linewidth]{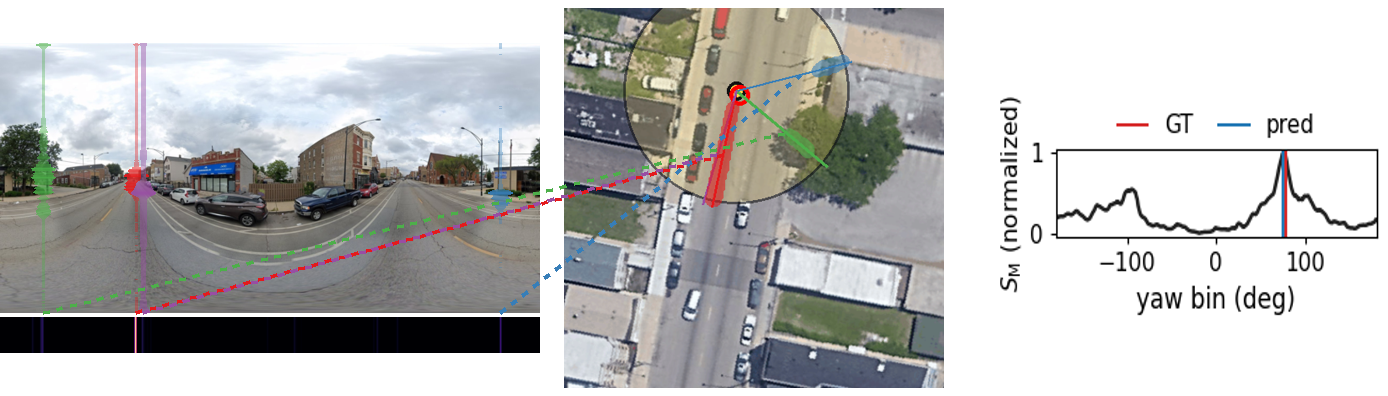}
    \caption{Matching visualization for one VIGOR cross-area sample. \emph{Ground view (top):} the four columns that contribute most to the predicted yaw, with bar length proportional to the per-pixel saliency $|\nabla\cdot I|$ of the matching score w.r.t.\ the input pixels.
    \emph{BEV (left):} thin colored lines mark each column's view direction from the predicted pose; circles on every pixel show the score each column votes into the predicted yaw bin, with circle area proportional to match score. Yellow and red markers denote the predicted and the GT pose.
    \emph{Match score curve (right):} $S_M$ over the full $360^{\circ}$ yaw range is plotted.}
    \label{fig:match_evidence_vigor}
\end{figure}

\subsubsection{Ablation Study}
\label{sec:ablation}
Table~\ref{tab:ablation} ablates key components. In \textbf{Point-to-Point Match}, we replace radially invariant scoring with projection-based 3-DoF scoring that maps ground pixels to BEV for each pose candidate; yielding decent results yet significantly behind radially invariant scoring. In \textbf{Polar-Transform}, we adapt OrienterNet\citet{sarlin2023orienternet}'s polar-transform-based projective 3-DoF pose scoring to satellite view input, and their flexible mapping overfits compared to the naive 3-DoF scoring, demonstrating our formulation's advantage over existing 3-DoF scoring approaches. We analyze Column Feature Extraction (Sec.~\ref{sec:feature_extraction}). In \textbf{Keypoint-based}, we use a keypoint feature extractor~\citep{Wang_2023PureACL}, which leads to poor performance because keypoints often miss important features. \textbf{Column MLP} uses MLP to aggregate pixel features per column. The weighted-sum formulation yields better accuracy. Finally, we show the effectiveness of the hybrid BEV-ground matching. In \textbf{Greedy Selection}, we match each BEV pixel to the ground feature with the highest correlation score during training, where dominant features overshadow others and limit information diversity. \textbf{Test-time Weighted} uses probabilistic matching during the test-time. While yielding decent results, the highest-scoring match proved more effective during inference.

\subsubsection{Accurate Yaw's Impact on Localization Method}
\label{sec:angular_prior}

We show that isolating yaw from the full 3-DoF problem is effective even when the localization method does not break down. Tables~\ref{tab:ours_as_initializer} and~\ref{tab:lays_as_prior_extended} report the localization error of FG2~\citep{Xia_2025_CVPR_FG2} when its first-stage yaw is replaced by different estimators. \textbf{+ Two-stage} initializes FG2's second-stage localizer with FG2's own first-stage yaw; \textbf{+ Ours init.} substitutes \ourmethod{}'s yaw, and the VIGOR \textbf{(2-DoF)} variant additionally fixes our yaw and solves only 2D location via Procrustes analysis. On VIGOR (Table~\ref{tab:ours_as_initializer}), where FG2's yaw is already functional, our 2-DoF variant still improves localization by $14\%$ over FG2's two-stage ($5.95\rightarrow5.10$), approaching the ground-truth-yaw oracle ($2.41$). The benefit grows as the base yaw becomes unreliable (Table~\ref{tab:lays_as_prior_extended}): even under mild noise on MGL the substitution more than halves the median error ($8.95\,\mathrm{m}\rightarrow3.99\,\mathrm{m}$), and under unknown yaw on KITTI, where the two-stage FG2 stays near random, our yaw makes precise localization possible ($12.26\,\mathrm{m}\rightarrow1.48\,\mathrm{m}$ median). Isolating yaw is therefore most critical in the hardest cases.

\begin{table}[t]
\centering
\begin{minipage}{0.42\linewidth}
\centering
\caption{FG2 localization error (m) on cross-area VIGOR with different first-stage yaw initializers.}
\label{tab:ours_as_initializer}
\setlength{\tabcolsep}{4pt}
\begin{tabular}{|l|c|}
\hline
First-stage yaw & Mean$\downarrow$ \\
\hline
FG2                           & 10.02 \\
+ Two-stage                   & 5.95 \\
+ \textbf{Ours init.}         & 5.43 \\
+ \textbf{Ours init.} (2-DoF) & \textbf{5.10} \\
\hline
+ GT Yaw (oracle)             & 2.41 \\
\hline
\end{tabular}
\end{minipage}\hfill
\begin{minipage}{0.55\linewidth}
\centering
\caption{FG2 localization error (m) on MGL and KITTI with different first-stage yaw initializers.}
\label{tab:lays_as_prior_extended}
\setlength{\tabcolsep}{4pt}
\begin{tabular}{|l|cc|}
\hline
First-stage yaw & Mean$\downarrow$ & Median$\downarrow$ \\
\hline
\multicolumn{3}{|l|}{\textit{MGL} ($\pm45^{\circ}$)} \\
\hline
FG2                   & 10.53 & 8.90 \\
+ Two-stage           & 10.54 & 8.95 \\
+ \textbf{Ours init.} & \textbf{6.42} & \textbf{3.99} \\
\hline
\multicolumn{3}{|l|}{\textit{KITTI} ($\pm180^{\circ}$)} \\
\hline
FG2                   & 14.32 & 14.40 \\
+ Two-stage           & 14.44 & 12.26 \\
+ \textbf{Ours init.} & \textbf{2.52} & \textbf{1.48} \\
\hline
\end{tabular}
\end{minipage}
\end{table}

\subsubsection{Robustness Against Pitch and Roll Noise}
\label{sec:robustness_pr_main}
Our column-wise formulation is inherently insensitive to pitch errors, as features are aggregated per column rather than per pixel row. Roll is handled at the feature-encoding level, since the roll-tilted horizon is clear in the input image. Table~\ref{tab:pitchroll_impact_main} confirms our method stays robust under $\pm10^\circ$ pitch and roll noise on MGL, with minimal degradation.

\begin{table}[!ht]
\centering
\begin{minipage}{0.48\linewidth}
\centering
\caption{Comparison of cross-view yaw estimation accuracy on Ford Dataset Log1 (Highway) with $\pm$20m location noise and $\pm45^{\circ}$ yaw noise.}
\label{tab:comparison_ford}
\begin{tabular}{|l|ccc|}
\hline
Method & \textless1$^{\circ}$ & \textless2$^{\circ}$ & \textless4$^{\circ}$ \\
\hline
CCVPE~\citep{xia2023_CCVPE} & 41.36 & 62.74 & 75.11 \\
BoostAcc~\citep{Shi_2023Boosting} & 24.38 & 43.24 & 67.86 \\
G2S~\citep{g2sweakly_2024_shi} & 16.81 & 47.71 & 68.43 \\
FG2~\citep{Xia_2025_CVPR_FG2} & 16.75 & 41.84 & 57.13 \\
\textbf{Ours} & \textbf{67.05} & \textbf{85.14} & \textbf{91.76} \\
\hline
\end{tabular}
\end{minipage}\hfill
\begin{minipage}{0.48\linewidth}
\centering
\caption{Impact of components in \ourmethod~for yaw estimation with $180^{\circ}$ yaw ambiguity in MGL dataset.}
\label{tab:ablation}
\begin{tabular}{|l|ccc|}
\hline
Alternative Method & \textless1$^{\circ}$ & \textless2$^{\circ}$ & \textless4$^{\circ}$ \\
\hline
Point-to-Point Match & 13.19 & 23.67 & 38.88 \\
Polar-Transform~\citep{sarlin2023orienternet} & 7.59 & 14.53 & 23.74 \\
\hline
Keypoint-based & 20.12 & 28.75 & 33.10 \\
Column MLP & 31.73 & 48.23 & 58.13 \\
\hline
Greedy Selection & 28.69 & 41.70 & 47.62 \\
Test-time Weighted & 30.77 & 44.72 & 51.33 \\
\hline
\textbf{Ours} & \textbf{34.81} & \textbf{52.42} & \textbf{61.74} \\
\hline
\end{tabular}
\end{minipage}
\end{table}

\begin{table}[!ht]
\centering
\caption{Impact of pitch and roll noise on cross-view yaw estimation accuracy on MGL Dataset with 20m location noise and unknown yaw.}
\label{tab:pitchroll_impact_main}
\begin{tabular}{|l|ccc|}
\hline
Pitch, Roll Noise & \textless1$^{\circ}$ & \textless2$^{\circ}$ & \textless4$^{\circ}$ \\
\hline
$\pm0^\circ$, $\pm0^\circ$ & 34.81 & 52.42 & 61.74 \\
$\pm10^\circ$, $\pm0^\circ$ & 34.36 & 50.26 & 58.61 \\
$\pm10^\circ$, $\pm10^\circ$ & 33.15 & 48.98 & 58.60 \\
\hline
\end{tabular}
\end{table}

\subsubsection{Computational Cost}
\label{sec:computational_cost}
\ourmethod~runs at 0.16\,s per query for VIGOR~\cite{zhu2021vigor} Dataset with most columns on an NVIDIA RTX A6000, competitive with FG2~\citep{Xia_2025_CVPR_FG2}'s 0.26\,s. The yaw-voting stage dominates but stays comparable to feature extraction (appendix has a breakdown), with room for kernel-level optimization.

\subsubsection{Failure Cases and Limitations}
Our method has two main limitations. First, it cannot provide precise standalone localization, as the voting formulation distributes scores along radial lines rather than at a single location; we address this by using \ourmethod~as a yaw prior for downstream 3-DoF methods (Sec.~\ref{sec:angular_prior}). Second, performance degrades in scenes with highly uniform texture (e.g., open fields), rare in practice. The appendix provides a detailed failure analysis.

\section{Conclusion}
We introduced \ourmethod, a sub-degree yaw estimator based on geometric line-alignment consensus between the ground view and the BEV. Our key insight is that a single radial correspondence is sufficient to determine yaw, even when the camera position is unknown: pairwise voting across all candidate positions makes the formulation radially invariant, eliminating the ground-height or known-location assumptions that prior methods require. Experiments on Mapillary, Ford, KITTI, and VIGOR show consistent state-of-the-art gains across diverse scenarios, from urban imagery to highway driving. By isolating yaw from location, \ourmethod~shrinks the localization search space and improves downstream 3-DoF accuracy, opening directions for integrating yaw priors into broader localization pipelines for navigation and AR/MR applications.

\paragraph{Acknowledgments}
This work was supported by the National Research Foundation of Korea(NRF) grant funded by the Korea government(MSIT) (No. RS-2024-00463802, No. RS-2022-NR070595).



\bibliographystyle{splncs04}
\bibliography{main}

@String(CVPR  = {IEEE Conf. Comput. Vis. Pattern Recog.})

@String(ICCV  = {Int. Conf. Comput. Vis.})

@String(ECCV  = {Eur. Conf. Comput. Vis.})

@String(NeurIPS = {Adv. Neural Inform. Process. Syst.})

@String(ACCV  = {Asian Conf. Comput. Vis.})

@String(CVPR  = {CVPR})

@String(ICCV  = {ICCV})

@String(ECCV  = {ECCV})

@String(NeurIPS = {NeurIPS})

@String(ACCV  = {ACCV})

@misc{Ford-Multi-AV,
    title={Ford Multi-AV Seasonal Dataset},
    author={Siddharth Agarwal and Ankit Vora and Gaurav Pandey and Wayne Williams and Helen Kourous and James McBride},
    year={2020},
    eprint={2003.07969},
    archivePrefix={arXiv},
    primaryClass={cs.RO}
}

@inproceedings{Liu2019lending,
  author    = {Liu, Liu and Li, Hongdong},
  title     = {Lending Orientation to Neural Networks for Cross-View Geo-Localization},
  booktitle = {Proceedings of the IEEE/CVF Conference on Computer Vision and Pattern Recognition (CVPR)},
  year      = {2019},
  pages     = {5617--5626}
}

@incollection{NIPS2019_shi, title = {Spatial-Aware Feature Aggregation for Image based Cross-View Geo-Localization}, author = {Shi, Yujiao and Liu, Liu and Yu, Xin and Li, Hongdong}, booktitle = {Advances in Neural Information Processing Systems 32}, editor = {H. Wallach and H. Larochelle and A. Beygelzimer and F. d\textquotesingle Alch'{e}-Buc and E. Fox and R. Garnett}, pages = {10090--10100}, year = {2019}, publisher = {Curran Associates, Inc.}, url = {http://papers.nips.cc/paper/9199-spatial-aware-feature-aggregation-for-image-based-cross-view-geo-localization.pdf} }

@inproceedings{shi2019optimal, title={Optimal Feature Transport for Cross-View Image Geo-Localization}, author={Shi, Yujiao and Yu, Xin and Liu, Liu and Zhang, Tong and Li, Hongdong}, booktitle={arXiv preprint arXiv:1907.05021}, year={2019} }

@INPROCEEDINGS{2020shi,
  author={Shi, Yujiao and Yu, Xin and Campbell, Dylan and Li, Hongdong},
  booktitle={2020 IEEE/CVF Conference on Computer Vision and Pattern Recognition (CVPR)}, 
  title={Where Am I Looking At? Joint Location and Orientation Estimation by Cross-View Matching}, 
  year={2020},
  volume={},
  number={},
  pages={4063-4071},
  doi={10.1109/CVPR42600.2020.00412}}

@inproceedings{zhu2021vigor,
  title={VIGOR: Cross-View Image Geo-localization beyond One-to-one Retrieval},
  author={Zhu, Sijie and Yang, Taojiannan and Chen, Chen},
  booktitle={Proceedings of the IEEE/CVF Conference on Computer Vision and Pattern Recognition},
  pages={3640--3649},
  year={2021}
}

@inproceedings{shi_2022CVPR_beyond,
    title={Beyond Cross-view Image Retrieval: Highly Accurate Vehicle Localization Using Satellite Image}, author={Shi, Yujiao and Li, Hongdong},
    booktitle={Proceedings of the IEEE Conference on Computer Vision and Pattern Recognition},
    year={2022}
}

@InProceedings{Lentsch_2023SliceMatch,
    author    = {Lentsch, Ted and Xia, Zimin and Caesar, Holger and Kooij, Julian F. P.},
    title     = {SliceMatch: Geometry-Guided Aggregation for Cross-View Pose Estimation},
    booktitle = {Proceedings of the IEEE/CVF Conference on Computer Vision and Pattern Recognition (CVPR)},
    month     = {June},
    year      = {2023},
    pages     = {17225-17234}
}

@INPROCEEDINGS{Shi_2023Boosting,
  author={Shi, Yujiao and Wu, Fei and Perincherry, Akhil and Vora, Ankit and Li, Hongdong},
  booktitle={2023 IEEE/CVF International Conference on Computer Vision (ICCV)}, 
  title={Boosting 3-DoF Ground-to-Satellite Camera Localization Accuracy via Geometry-Guided Cross-View Transformer}, 
  year={2023},
  volume={},
  number={},
  pages={21459-21469},
  doi={10.1109/ICCV51070.2023.01967}
}

@inproceedings{Song_2023_LearningFlow,
 author = {Song, Zhenbo and xianghui, ze and Lu, Jianfeng and Shi, Yujiao},
 booktitle = {Advances in Neural Information Processing Systems},
 editor = {A. Oh and T. Naumann and A. Globerson and K. Saenko and M. Hardt and S. Levine},
 pages = {70612--70625},
 publisher = {Curran Associates, Inc.},
 title = {Learning Dense Flow Field for Highly-accurate Cross-view Camera Localization},
 url = {https://proceedings.neurips.cc/paper_files/paper/2023/file/df5f94d6ac6e13d830d70536cde9f0d2-Paper-Conference.pdf},
 volume = {36},
 year = {2023}
}

@inproceedings{NEURIPS2023_homography,
 author = {Wang, Xiaolong and Xu, Runsen and Cui, Zhuofan and Wan, Zeyu and Zhang, Yu},
 booktitle = {Advances in Neural Information Processing Systems},
 editor = {A. Oh and T. Naumann and A. Globerson and K. Saenko and M. Hardt and S. Levine},
 pages = {5301--5319},
 publisher = {Curran Associates, Inc.},
 title = {Fine-Grained Cross-View Geo-Localization Using a Correlation-Aware Homography Estimator},
 url = {https://proceedings.neurips.cc/paper_files/paper/2023/file/112d8e0c7563de6e3408b49a09b4d8a3-Paper-Conference.pdf},
 volume = {36},
 year = {2023}
}

@article{xia2023_CCVPE,
  author={Xia, Zimin and Booij, Olaf and Kooij, Julian F. P.},
  journal={IEEE Transactions on Pattern Analysis and Machine Intelligence}, 
  title={Convolutional Cross-View Pose Estimation}, 
  year={2024},
  volume={46},
  number={5},
  pages={3813-3831},
  doi={10.1109/TPAMI.2023.3346924}
}

@inproceedings{Wang_2023PureACL,
  title={View Consistent Purification for Accurate Cross-View Localization},
  author={Wang, Shan and Zhang, Yanhao and Perincherry, Akhil and Vora, Ankit and Li, Hongdong},
  booktitle={Proceedings of the IEEE/CVF International Conference on Computer Vision},
  pages={8197--8206},
  year={2023}
}

@InProceedings{Wang_2024OVCL,
    author    = {Wang, Shan and Nguyen, Chuong and Liu, Jiawei and Zhang, Yanhao and Muthu, Sundaram and Maken, Fahira Afzal and Zhang, Kaihao and Li, Hongdong},
    title     = {View From Above: Orthogonal-View aware Cross-view Localization},
    booktitle = {Proceedings of the IEEE/CVF Conference on Computer Vision and Pattern Recognition (CVPR)},
    month     = {June},
    year      = {2024},
    pages     = {14843-14852}
}

@InProceedings{Shi_2022_ACCV_CVLNet,
    author    = {Shi, Yujiao and Yu, Xin and Wang, Shan and Li, Hongdong},
    title     = {CVLNet: Cross-View Feature Correspondence Learning for Video-based Camera Localization},
    booktitle = {Proceedings of the Asian Conference on Computer Vision (ACCV)},
    month     = {December},
    year      = {2022},
    pages     = {652-669}
}

@InProceedings{Fervers_2023_CVPR_CSLA,
    author    = {Fervers, Florian and Bullinger, Sebastian and Bodensteiner, Christoph and Arens, Michael and Stiefelhagen, Rainer},
    title     = {Uncertainty-Aware Vision-Based Metric Cross-View Geolocalization},
    booktitle = {Proceedings of the IEEE/CVF Conference on Computer Vision and Pattern Recognition (CVPR)},
    month     = {June},
    year      = {2023},
    pages     = {21621-21631}
}

@INPROCEEDINGS{VGG-16,
  author={Liu, Shuying and Deng, Weihong},
  booktitle={2015 3rd IAPR Asian Conference on Pattern Recognition (ACPR)}, 
  title={Very deep convolutional neural network based image classification using small training sample size}, 
  year={2015},
  volume={},
  number={},
  pages={730-734},
  doi={10.1109/ACPR.2015.7486599}}

@article{Geiger2013KITTI,
  author = {Andreas Geiger and Philip Lenz and Christoph Stiller and Raquel Urtasun},
  title = {Vision meets Robotics: The KITTI Dataset},
  journal = {International Journal of Robotics Research (IJRR)},
  year = {2013}
}

@inproceedings{sarlin2023orienternet,
  author    = {Paul-Edouard Sarlin and
               Daniel DeTone and
               Tsun-Yi Yang and
               Armen Avetisyan and
               Julian Straub and
               Tomasz Malisiewicz and
               Samuel Rota Bulo and
               Richard Newcombe and
               Peter Kontschieder and
               Vasileios Balntas},
  title     = {{OrienterNet: Visual Localization in 2D Public Maps with Neural Matching}},
  booktitle = {CVPR},
  year      = {2023},
}

@inproceedings{2024maplocnet,
  author    = {Wu, Hang and Zhang, Zhenghao and Lin, Siyuan and Mu, Xiangru and Zhao, Qiang and Yang, Ming and Qin, Tong},
  title     = {MapLocNet: Coarse-to-Fine Feature Registration for Visual Re-Localization in Navigation Maps},
  booktitle = {IEEE/RSJ International Conference on Intelligent Robots and Systems (IROS)},
  year      = {2024}
}

@inproceedings{sarlin2023snap,
  author    = {Paul-Edouard Sarlin and
               Eduard Trulls and
               Marc Pollefeys and
               Jan Hosang and
               Simon Lynen},
  title     = {{SNAP: Self-Supervised Neural Maps for Visual Positioning and Semantic Understanding}},
  booktitle = {NeurIPS},
  year      = {2023}
}

@inproceedings{zhu2022transgeo,
  title={TransGeo: Transformer Is All You Need for Cross-view Image Geo-localization},
  author={Zhu, Sijie and Shah, Mubarak and Chen, Chen},
  booktitle={Proceedings of the IEEE/CVF Conference on Computer Vision and Pattern Recognition},
  pages={1162--1171},
  year={2022}
}

@article{2024vpsreview,
author = {Rajpurohit, Aryan and Kumar, Puneet and Singh, Dalwinder and Kumar, Rishu},
year = {2024},
month = {01},
pages = {},
title = {A Review on Visual Positioning System},
journal = {SSRN Electronic Journal},
doi = {10.2139/ssrn.4485458}
}

@Article{mlpcompass2024,
AUTHOR = {Du, Yao and Mateo, Carlos and Tahri, Omar},
TITLE = {A Multilayer Perceptron-Based Spherical Visual Compass Using Global Features},
JOURNAL = {Sensors},
VOLUME = {24},
YEAR = {2024},
NUMBER = {7},
ARTICLE-NUMBER = {2246},
URL = {https://www.mdpi.com/1424-8220/24/7/2246},
PubMedID = {38610457},
ISSN = {1424-8220},
DOI = {10.3390/s24072246}
}

@article{liu2022visual,
  title={A visual compass based on point and line features for UAV high-altitude orientation estimation},
  author={Liu, Ying and Tao, Junyi and Kong, Da and Zhang, Yu and Li, Ping},
  journal={Remote Sensing},
  volume={14},
  number={6},
  pages={1430},
  year={2022},
  publisher={MDPI}
}

@article{Xian2019UprightNetGC,
  title={UprightNet: Geometry-Aware Camera Orientation Estimation From Single Images},
  author={Wenqi Xian and Zhengqi Li and Matthew Fisher and Jonathan Eisenmann and Eli Shechtman and Noah Snavely},
  journal={2019 IEEE/CVF International Conference on Computer Vision (ICCV)},
  year={2019},
  pages={9973-9982},
  url={https://api.semanticscholar.org/CorpusID:201107189}
}

@inproceedings{jin2023perspective,
      title={Perspective Fields for Single Image Camera Calibration},
      author={Linyi Jin and Jianming Zhang and Yannick Hold-Geoffroy and Oliver Wang and Kevin Matzen and Matthew Sticha and David F. Fouhey},
      booktitle = {CVPR},
      year={2023}
}

@inproceedings{2023iccvcamrot,
    author = {Fabien Delattre and David Dirnfeld and Phat Nguyen and Stephen Scarano and Michael J. Jones and Pedro Miraldo and Erik Learned-Miller},
    title = {Robust Frame-to-Frame Camera Rotation Estimation in Crowded Scenes},
    booktitle = {IEEE/CVF International Conference on Computer Vision (ICCV)},
    year = 2023,
    pages = {9752-9762},
    month = 10
}

@Article{2022navvis,
AUTHOR = {Yang, Anbang and Beheshti, Mahya and Hudson, Todd E. and Vedanthan, Rajesh and Riewpaiboon, Wachara and Mongkolwat, Pattanasak and Feng, Chen and Rizzo, John-Ross},
TITLE = {UNav: An Infrastructure-Independent Vision-Based Navigation System for People with Blindness and Low Vision},
JOURNAL = {Sensors},
VOLUME = {22},
YEAR = {2022},
NUMBER = {22},
ARTICLE-NUMBER = {8894},
URL = {https://www.mdpi.com/1424-8220/22/22/8894},
PubMedID = {36433501},
ISSN = {1424-8220},
DOI = {10.3390/s22228894}
}

@article{unet,
  author = {Ronneberger, Olaf and Fischer, Philipp and Brox, Thomas},
  ee = {http://arxiv.org/abs/1505.04597},
  journal = {CoRR},
  title = {U-Net: Convolutional Networks for Biomedical Image Segmentation.},
  url = {http://dblp.uni-trier.de/db/journals/corr/corr1505.html\#RonnebergerFB15},
  volume = {abs/1505.04597},
  year = 2015
}

@ARTICLE{2022softgeo,
  author={Guo, Yulan and Choi, Michael and Li, Kunhong and Boussaid, Farid and Bennamoun, Mohammed},
  journal={IEEE Transactions on Image Processing}, 
  title={Soft Exemplar Highlighting for Cross-View Image-Based Geo-Localization}, 
  year={2022},
  volume={31},
  number={},
  pages={2094-2105},
  doi={10.1109/TIP.2022.3152046}}

@misc{xu2015empiricalevaluationrectifiedactivations,
      title={Empirical Evaluation of Rectified Activations in Convolutional Network}, 
      author={Bing Xu and Naiyan Wang and Tianqi Chen and Mu Li},
      year={2015},
      eprint={1505.00853},
      archivePrefix={arXiv},
      primaryClass={cs.LG},
      url={https://arxiv.org/abs/1505.00853}, 
}

@article{geoloc_survey,
author = {Brejcha, Jan and \v{C}ad\'{\i}k, Martin},
title = {State-of-the-art in visual geo-localization},
year = {2017},
issue_date = {August    2017},
publisher = {Springer-Verlag},
address = {Berlin, Heidelberg},
volume = {20},
number = {3},
issn = {1433-7541},
url = {https://doi.org/10.1007/s10044-017-0611-1},
doi = {10.1007/s10044-017-0611-1},
journal = {Pattern Anal. Appl.},
month = aug,
pages = {613–637},
numpages = {25}
}

@article{Weyand2016PlaNetP,
  title={PlaNet - Photo Geolocation with Convolutional Neural Networks},
  author={Tobias Weyand and Ilya Kostrikov and James Philbin},
  journal={ArXiv},
  year={2016},
  volume={abs/1602.05314},
  url={https://api.semanticscholar.org/CorpusID:171846}
}

@inproceedings{Hays:2008:im2gps,
  Author    = {James Hays and Alexei A. Efros},
  Title     = {im2gps: estimating geographic information from a single image},
  Booktitle = {Proceedings of the {IEEE} Conf. 
               on Computer Vision and Pattern Recognition ({CVPR})},
  Year      = {2008},
}

@INPROCEEDINGS{buildingrome,
  author={Agarwal, Sameer and Snavely, Noah and Simon, Ian and Seitz, Steven M. and Szeliski, Richard},
  booktitle={2009 IEEE 12th International Conference on Computer Vision}, 
  title={Building Rome in a day}, 
  year={2009},
  volume={},
  number={},
  pages={72-79},
  doi={10.1109/ICCV.2009.5459148}}

@InProceedings{worldwidepose,
  author        = {Yunpeng Li and Noah Snavely and Daniel Huttenlocher and
                  Pascal Fua},
  title         = {Worldwide Pose Estimation using 3{D} Point Clouds},
  booktitle     = {European Conf. on Computer Vision},
  year          = {2012}
}

@InProceedings{scalable6-dof,
author="Middelberg, Sven
and Sattler, Torsten
and Untzelmann, Ole
and Kobbelt, Leif",
editor="Fleet, David
and Pajdla, Tomas
and Schiele, Bernt
and Tuytelaars, Tinne",
title="Scalable 6-DOF Localization on Mobile Devices",
booktitle="Computer Vision -- ECCV 2014",
year="2014",
publisher="Springer International Publishing",
address="Cham",
pages="268--283",
isbn="978-3-319-10605-2"
}

@inproceedings{g2sweakly_2024_shi,
author = {Shi, Yujiao and Li, Hongdong and Perincherry, Akhil and Vora, Ankit},
title = {Weakly-Supervised Camera Localization by Ground-to-Satellite Image Registration},
year = {2024},
isbn = {978-3-031-72672-9},
publisher = {Springer-Verlag},
address = {Berlin, Heidelberg},
url = {https://doi.org/10.1007/978-3-031-72673-6_3},
doi = {10.1007/978-3-031-72673-6_3},
booktitle = {Computer Vision – ECCV 2024: 18th European Conference, Milan, Italy, September 29–October 4, 2024, Proceedings, Part IX},
pages = {39–57},
numpages = {19},
location = {Milan, Italy}
}

@InProceedings{Xia_2025_CVPR_FG2,
    author    = {Xia, Zimin and Alahi, Alexandre},
    title     = {FG{\textasciicircum}2: Fine-Grained Cross-View Localization by Fine-Grained Feature Matching},
    booktitle = {Proceedings of the Computer Vision and Pattern Recognition Conference (CVPR)},
    month     = {June},
    year      = {2025},
    pages     = {6362-6372}
}

@inproceedings{
loshchilov2018decoupled,
title={Decoupled Weight Decay Regularization},
author={Ilya Loshchilov and Frank Hutter},
booktitle={International Conference on Learning Representations},
year={2019},
url={https://openreview.net/forum?id=Bkg6RiCqY7},
}

@article{paszke2019pytorch,
  title={PyTorch: An Imperative Style, High-Performance Deep Learning Library},
  author={Paszke, Adam and Gross, Sam and Massa, Francisco and Lerer, Adam and Bradbury, James and Chanan, Gregory and Killeen, Trevor and Lin, Zeming and Gimelshein, Naresh and Antiga, Luca and others},
  journal={Advances in Neural Information Processing Systems},
  volume={32},
  year={2019}
}

@inproceedings{xia2022visual,
  title={Visual Cross-View Metric Localization with Dense Uncertainty Estimates},
  author={Xia, Zimin and Booij, Olaf and Manfredi, Marco and Kooij, Julian FP},
  booktitle={European Conference on Computer Vision},
  pages={90--106},
  year={2022},
  organization={Springer}
}

\ifincludesuppl
  \clearpage
  \appendix

\section*{Appendix}
In this appendix, we provide the following:

\begin{itemize}
\item{Proof of Proposition~1 (Sec.~\ref{sec:proof_radial})}
\item{Training setup (Sec.~\ref{sec:training_setup})}
\item{Pseudo-code for Pair-wise Yaw Voting (Sec.~\ref{sec:pseudocode_dist})}
\item{Implementation Details (Sec.~\ref{sec:implementation_detail})}
\item{Baseline Evaluation Details (Sec.~\ref{sec:baseline_impl})}
\item{Preprocessing for Mapillary Geo-Localization Dataset (Sec.~\ref{sec:cross_view_mgl})}
\item{Visualization of Matching Process (Sec.~\ref{sec:visualization_supple})}
\item{Computational Cost Analysis (Sec.~\ref{sec:computation_cost_detail})}
\item{Failure Case Analysis (Sec.~\ref{sec:failure_cases})}
\item{Extended Yaw Analysis (Sec.~\ref{sec:extended_analysis})}
\item{Discussion and Future Work (Sec.~\ref{sec:discussion})}
\end{itemize}

\section{Proof of Proposition~1}
\label{sec:proof_radial}
\begin{proof}
If $(x_1, y_1)$, $(x_2, y_2)$, and $(u,v)$ are collinear, then $\tfrac{v - y_1}{u - x_1} = \tfrac{v - y_2}{u - x_2}$, so $\arctan\!\big(\tfrac{v-y_1}{u-x_1}\big) = \arctan\!\big(\tfrac{v-y_2}{u-x_2}\big)$. Since $\mathbf{Y}(k)$ depends only on the matched column, $\theta_{\mathrm{est}}$ is constant along the radial line through $(u,v)$.
\end{proof}

\section{Training Setup}
\label{sec:training_setup}
For the training setup, we based our implementation on the codebase provided by Shi et al.~\citet{Shi_2023Boosting}. We use NVIDIA RTX A6000 for training, with batch size $6$ and learning rate $10^{-4}$, using AdamW~\citep{loshchilov2018decoupled} optimizer. Loss weight $e^{-5}$ is used to adjust the loss scale. StepLR from PyTorch~\citep{paszke2019pytorch} is used, with a gamma of $0.25$ and step size of $4000$ iterations.

\section{Pseudo-code for Pair-wise Yaw Voting}
\begin{algorithm}
\caption{Pair-wise Yaw Voting with Distance Constraint}
\label{alg:pseudocode_dist}
\begin{algorithmic}[1]
  \State \textbf{Input:} Match yaw array $\mathbf{M_r}$, Match score array $\mathbf{M_s}$, Pose plausibility $\mathbf{S_P}\in\mathbb{R}^{|\Theta|\times(|\mathcal{X}|\times|\mathcal{Y}|)}$
  \State \textbf{Output:} Yaw bins for 2D poses $\mathbf{S}\in\mathbb{R}^{|\Theta|\times(|\mathcal{X}|\times|\mathcal{Y}|)}$
  \State $\mathbf{S}\gets\text{Zero-filled array}\in\mathbb{R}^{|\Theta|\times(|\mathcal{X}|\times|\mathcal{Y}|)}$
  \State $\theta_{\max}\gets\text{Max yaw noise (degrees)}$
  \State $R \gets \text{Max distance radius}$

  \ForAll{each candidate pose $(x,y)\in\mathcal{X}\times\mathcal{Y}$}
    \ForAll{each BEV pixel $(u,v)$ in All BEV pixels}
      \State $d \gets \sqrt{(u-x)^2 + (v-y)^2}$ \Comment{Calculate distance}
      \If{$d < R$} \Comment{Apply distance constraint}
        \State $\theta_{\text{abs}}\gets\arctan\!\left(\tfrac{v-y}{u-x}\right)$ \Comment{Absolute angle}
        \State $\theta_{\text{rel}}\gets\mathbf{M_r}(u,v)$ \Comment{Relative yaw}
        \State $\theta_{\text{est}}\gets\theta_{\text{abs}}-\theta_{\text{rel}}$ \Comment{Estimated yaw}
        \State $B_{\text{est}}\gets\lfloor\theta_{\text{est}}+0.5\rfloor$ \Comment{Yaw bin index}
        \If{$-\theta_{\max}\le B_{\text{est}}\le\theta_{\max}$}
          \State $\mathbf{S}(B_{\text{est}}, (x, y))\mathrel{+}= \mathbf{M_s}(u,v)$ \Comment{Vote match score to yaw}
        \EndIf
      \EndIf
    \EndFor
  \EndFor

  \State $\mathbf{S}\gets\mathbf{S}+\mathbf{S_P}$ \Comment{Pose plausibility adjustment}
\end{algorithmic}
\end{algorithm}
\label{sec:pseudocode_dist}
We provide the pseudo-code of the Yaw-aligned Bin Score Scattering method's key component, score accumulation in Alg.~\ref{alg:pseudocode_dist}, for clarification of its unique formulation. We wrote it in a loop form to make it more intuitive. PyTorch functions were used for effective implementation and GPU-accelerated parallelism. Further implementation details are in Sec.~\ref{sec:implementation_detail}.

The $\mathbf{S}$ array is indexed with coordinates and discrete yaws in the pseudo-code. $\mathcal{X}$ and $\mathcal{Y}$ are all valid $x$ and $y$ 2D camera pose's pixels in BEV space within noise range, and $\Theta$ represents valid discrete yaws in degree $[-\theta_{max}, -\theta_{max}+1, ..., \theta_{max}-1, \theta_{max}]$, which depends on the maximum yaw noise $\theta_{max}$.

\section{Implementation Details}
\label{sec:implementation_detail}
This section provides a detailed formulation of the method not discussed in the main paper for brevity. The variables denoted using the notation of a hat, such as $\hat{x}$ instead of $x$, are used for implementation.

\subsection{Feature Extraction}
We utilize the implementation of U-Net~\citep{unet} built on the VGG-16~\citep{VGG-16} encoder backbone, which features separate weights for the ground and the BEV, as provided in the official code of Shi et al.~\citet{Shi_2023Boosting}, along with an additional heading encoding channel. Instead of the cosine-based representation used by Wang et al.~\citet{Wang_2023PureACL}, we use linear mapping of the $x$ coordinate from $-1$ to $1$ and concatenate them along the channel axis.

Unlike the original implementation of Shi et al.~\citet{Shi_2023Boosting}, the features are not normalized per image. Instead, ground and BEV features are normalized along channel dimensions:
\begin{equation}
\mathbf{\hat{F}}_\text{bev}(u, v) = \frac{\mathbf{F}_\text{bev}(u, v)}{\|\mathbf{F}_\text{bev}(u, v)\|_2}, \quad \mathbf{\hat{F}}_\text{grd}(c) = \frac{\mathbf{F}_\text{grd}(c)}{\|\mathbf{F}_\text{grd}(c)\|_2}
\end{equation}
$\mathbf{\hat{F}}_\text{bev}$ and $\mathbf{\hat{F}}_\text{grd}$ are used for the cosine-similarity computation explained in the method section.

The confidence for each feature is computed using the original feature before normalization. For the ground confidence estimator, output values are directly from the dense layer, while the BEV confidence estimator values are the sigmoid of the dense layer output. Max normalization is applied to the BEV confidence value, so the maximum confidence is 1.
\begin{equation}
\mathbf{C}_{\text{bev}}^{\text{max}} = \max_{u, v} \mathbf{C}_{\text{bev}}(u, v)
\end{equation}
\begin{equation}
\mathbf{\hat{C}}_{\text{bev}}(u, v) = \frac{\mathbf{C}_{\text{bev}}(u, v)}{\mathbf{C}_{\text{bev}}^{\text{max}}}
\end{equation}

\subsection{Pair-wise Yaw Voting with Distance Constraint}
We only consider possible positions under the assumption of the noise range for the score bins. The score bin is $\mathbb{R}^{|\Theta| \times (|\mathcal{X}| \times |\mathcal{Y}|)}$. \(\mathcal{X}\) and \(\mathcal{Y}\) are the sets of ranges for the $x$ and $y$ of pixels covering the maximal location noise range:
\begin{align}
\mathcal{X} &= \left\{ \left[i \Delta x, \, (i+1) \Delta x\right) \,\middle|\, i \in \{-L, \dots, L-1\} \right\} \\
\mathcal{Y} &= \left\{ \left[j \Delta y, \, (j+1) \Delta y\right) \,\middle|\, j \in \{-L, \dots, L-1\} \right\}
\end{align}
\noindent where $\Delta x$ and $\Delta y$ correspond to the same meter per pixel value of the BEV feature. Denoting maximal location noise along one direction as $N_\textbf{max}$, $L$ is:
\begin{equation}
L = \left\lceil \frac{N_\textbf{max}}{\Delta x} \right\rceil
\end{equation}

\(\Theta\) is the range of \(\theta\) in degrees, from the ceiling of $-\theta_{\textbf{max}}$ to $\theta_{\textbf{max}}$ divided by each resolution's bin's angular interval $\Delta \theta$ explained in the Sec.~\ref{sec:multi_res}.
\begin{equation}
\Theta_i = \left\{ \left ((i-1) \Delta \theta, i \Delta \theta \right), i \in \{-n, \dots, -1, 0, 1, \dots, n-1\} \right \}
\end{equation}
\noindent where
\begin{equation}
n = \left\lceil \frac{\theta_{\text{max}}}{\Delta \theta} \right\rceil
\end{equation}

We choose a fixed $R = 36.32$, subtracting $20$m from the maximal distance from the center $256 \times 0.22$m, for pixel length and meters per pixel for the Mapillary Geo-localization dataset~\citep{sarlin2023orienternet} and the Ford dataset~\citep{Ford-Multi-AV}. For the VIGOR dataset~\citep{zhu2021vigor}, we use a quarter of BEV's length of one side for $R$, as the meter per pixel varies across different cities.

\subsection{Pose Plausibility Score Model}
The pose plausibility score $\mathbf{S_P}$ is computed with a model based on Shi et al.~\citep{Shi_2023Boosting}'s Uncertainty estimator implementation. BEV features from a separate VGG-16 U-Net are used for input to the model, as sharing features disturbs the training of the main match scores. The intermediate layer dimension is adjusted to fit the output channel and number of angular bins for each resolution. The LeakyReLU~\citep{xu2015empiricalevaluationrectifiedactivations} replaces the final non-linearity with a negative slope $0.01$.

\subsection{Multi Resolution Scoring}
\label{sec:multi_res}
The final score is computed for feature at each resolution, following common practice~\cite{Shi_2023Boosting, Wang_2023PureACL, Wang_2024OVCL}. The ground and BEV image features with the same downsampling ratio from the U-Net are used for each resolution. Following Shi et al.~\citep{Shi_2023Boosting}, we use features downsampled 2, 4, and 8 times for scoring, and the score from the 2 times downsampling features is used for the final estimation. Each resolution is supervised using the log-probability loss described in the paper, with the lower-resolution feature used for coarser estimation.

The score bin is built per pixel for different coarseness levels for each resolution. The angular bin has discrete intervals of 1, 2, and 4 degrees for each resolution, with larger intervals for lower resolutions. The score is repeated along each axis and summed to a higher resolution. The summation is done before applying kernels for the scattering-based match score $S_M$ and the 3D plausibility score $S_P$.

\section{Preprocessing for Mapillary Geo-Localization Dataset}
\label{sec:cross_view_mgl}
The dataset, paired with a 2D map, can be downloaded by following instructions from OrienterNet~\citep{sarlin2023orienternet}'s repository. The data contains ground-view images with camera pose information. We use the urban Amsterdam part of the dataset for the cross-view setup. The Google Static Maps API fetches satellite-view images, and we filtered out top-down occluded views. Consistent with previous work~\citep{Shi_2023Boosting, Wang_2024OVCL}, satellite views are obtained as $1280\times1280$ images with a zoom level of 18 and a scale of 2. The satellite view is fetched for each ground view; however, no new image is downloaded if any existing satellite image covers a $512\times512$ region around the ground view.

OrienterNet~\citep{sarlin2023orienternet}, which introduced the Mapillary Geo-Localization Dataset, states that they removed ground views with low visibility of the surroundings, such as those facing a wall. We found such images remained, and removed those portions. The Amsterdam dataset is generally at a $512\times512$ resolution; we filtered out a small number of images with different resolutions to ensure a fair evaluation against the baselines. The train and test sets are split by sequence index to minimize spatial overlap.

\section{Baseline Evaluation Details}
\label{sec:baseline_impl}
\begin{table}[!ht]
\centering
\caption{\textbf{Summary of baseline official implementations for Ford, KITTI, and VIGOR datasets.} For each implementation status, we did the following: \textbf{None}: Dataloader adapted from external codebase. \textbf{Metrics on Paper}: Results cited from the original paper. \textbf{Small Yaw Range}: Retrained for high-yaw-noise. \textbf{Unused in Paper}: The official code's unused train script is fixed to evaluate full 3-DoF pose estimation. \textbf{Only Location}: The official code trains and evaluates only the location estimator for the dataset; the rotation estimator training code is adapted from another dataset script.}
\label{tab:baseline_datasets}
\begin{tabular}{|l|c|c|c|}
\hline
Method & Ford & KITTI & VIGOR \\
\hline
CCVPE~\citep{xia2023_CCVPE} & None & Metrics on Paper & Metrics on Paper \\
BoostAcc~\citep{Shi_2023Boosting} & Small Yaw Range & Small Yaw Range & Unused in Paper \\
G2S~\citep{g2sweakly_2024_shi} & None & Small Yaw Range & Only Location \\
FG2~\citep{Xia_2025_CVPR_FG2} & None & Small Yaw Range & Metrics on Paper \\
\hline
\end{tabular}
\end{table}
In this section and Table~\ref{tab:baseline_datasets}, we describe how each baseline uses each dataset and how we adapted the official code to evaluate each method at its best under our setting.

\subsection{Per-Dataset Adaptation}
The \textbf{Ford dataset} has been evaluated less frequently in recent work; only BoostAcc~\cite{Shi_2023Boosting} has reported results on it. Since they train and evaluate on the $\pm10^\circ$ yaw noise, we retrained the method with a $\pm45^\circ$ yaw range. The BoostAcc dataloader is adapted for both our method and other baselines for training and evaluation.

The \textbf{KITTI dataset} is widely used as a benchmark, yet methods commonly evaluate with only $\pm10^\circ$ yaw error. We used official codes and modified the rotation range during training and testing to evaluate baselines. CCVPE~\cite{xia2023_CCVPE} reported an official result with unknown yaw, but the weights were not released. CCVPE reports only $1^\circ/3^\circ/5^\circ$ accuracies, which we place in our $1^\circ/2^\circ/4^\circ$ columns. Since accuracy increases with the threshold, its $2^\circ/4^\circ$ entries are higher, conservative than reality.

For the \textbf{VIGOR dataset}, CCVPE~\cite{xia2023_CCVPE} and FG2~\cite{Xia_2025_CVPR_FG2} report unknown yaw estimation accuracy, and released model source code. FG2 released the VIGOR weight, and we ran inference to compute the accuracy for our angle threshold. CCVPE did not release the VIGOR weight, so we use the metrics reported in the paper. BoostAcc~\cite{Shi_2023Boosting} did not use the dataset in the paper, but has a dataloader in the released code; we use it to retrain and test the model. G2S~\cite{g2sweakly_2024_shi} provides a dataloader, but they did not use their rotation estimator for unknown yaw setups, stating that panoramic ground-view and BEV differences make it difficult to apply their self-supervised method for yaw estimation.

For the \textbf{MGL dataset}, none of the methods support it; thus, we adapted the official dataloader from OrienterNet~\cite{sarlin2023orienternet} code to use BEV input, for all baselines and ours.

\subsection{Behavior Across Yaw Uncertainty Ranges}
\label{sec:degradation_range}
\ourmethod{}'s margin over the baselines in Tables~\ref{tab:comparison_kitti} and~\ref{tab:comparison_vigor}\ofmainpaper{} widens as the yaw candidate range grows.
Table~\ref{tab:comparison_mgl}\ofmainpaper{} shows this on MGL across $\pm45^{\circ}$ and $\pm180^{\circ}$ noise (a $4{\times}$ expansion of the candidate range): CCVPE's $<1^{\circ}$ accuracy drops from $21.36\%$ to $6.55\%$, FG2 from $18.59\%$ to $3.13\%$, BoostAcc from $7.40\%$ to $0.58\%$, and G2S from $32.67\%$ to $0.59\%$.
Refinement-based methods (G2S, BoostAcc) fall the most: with a limited-FoV camera, their refinement cannot escape an initial $\sim180^{\circ}$ error. \ourmethod{} also drops ($72.10 \rightarrow 34.81$), but far less.

\section{Visualization of Matching Process}
\label{sec:visualization_supple}

\begin{figure}
    \includegraphics[width=0.98\linewidth]{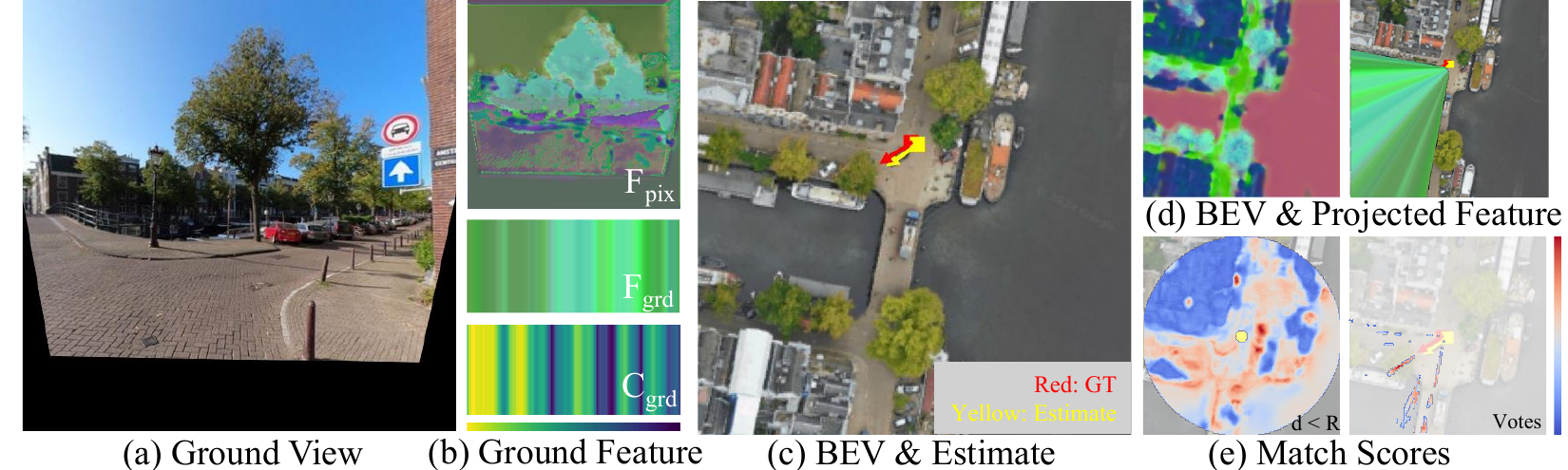}
    \caption{Visualization of our yaw estimation in the MGL dataset. Our method gives high confidence and a match score for the column-aligned road on the left and the tree in the middle.}
    \label{fig:appendix_example1}
\end{figure}
\begin{figure}
    \includegraphics[width=0.98\linewidth]{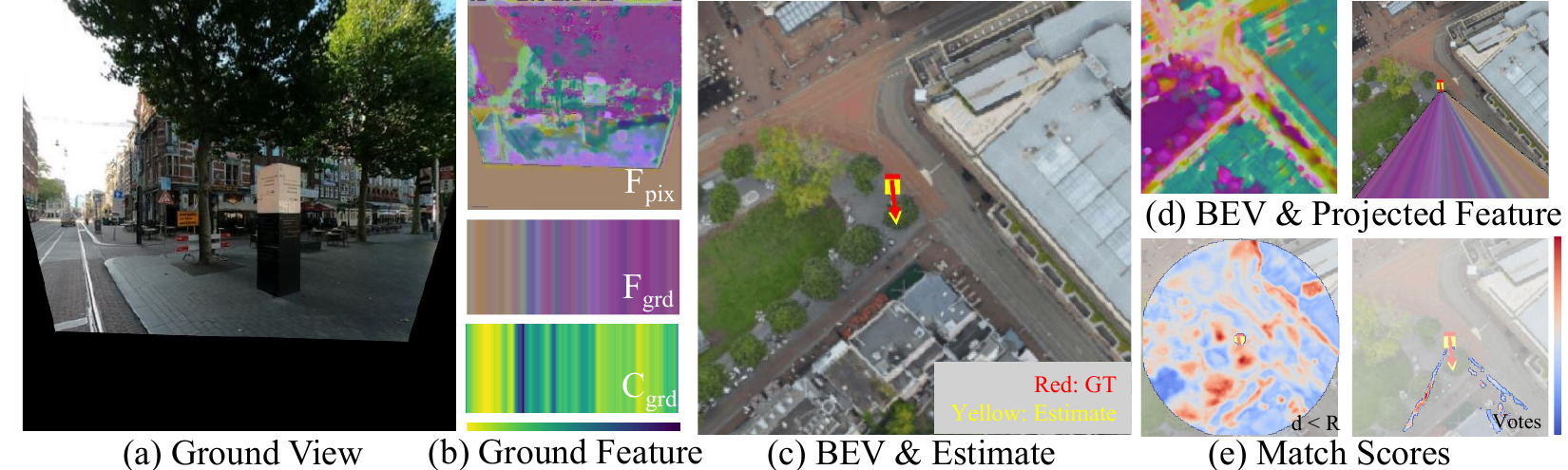}
    \caption{Visualization of our yaw estimation in the MGL dataset. The estimated pose has the most votes from a tree.}
    \label{fig:appendix_example2}
\end{figure}
\begin{figure}
    \includegraphics[width=0.98\linewidth]{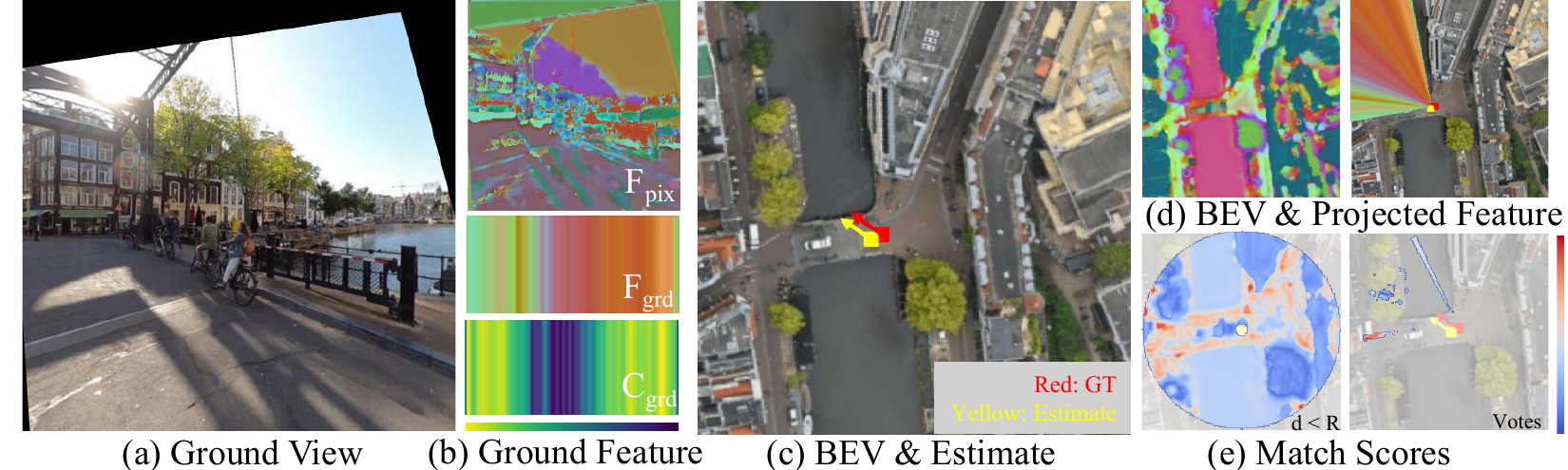}
    \caption{Visualization of our yaw estimation in the MGL dataset. Our method has high confidence and a match score on the features extracted from the building at the front of the road.}
    \label{fig:appendix_example3}
\end{figure}

\begin{figure}
    \includegraphics[width=0.95\linewidth]{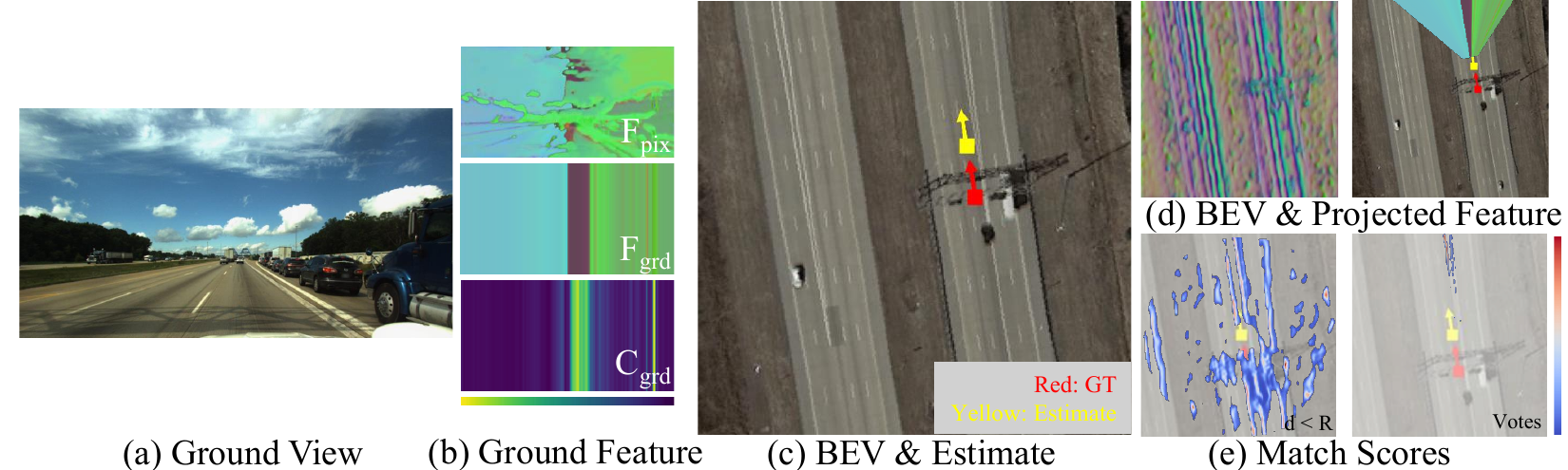}
    \caption{Visualization of our yaw estimation in the Ford dataset. Our method aligns the road components.}
    \label{fig:appendix_example_ford}
\end{figure}

\begin{figure}
    \includegraphics[width=0.95\linewidth]{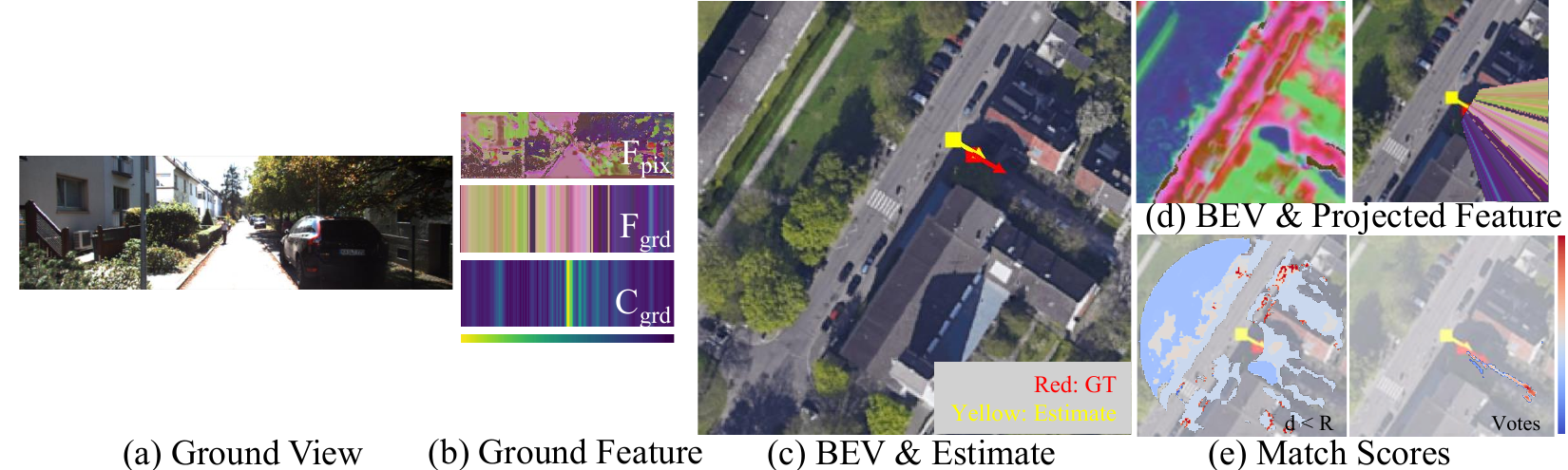}
    \caption{Visualization of our yaw estimation in the KITTI dataset, same-area (Test~1). Despite the wide-baseline perspective change between the ground view and BEV, our method identifies road and roadside structure correspondences to estimate yaw.}
    \label{fig:appendix_example_kitti_same}
\end{figure}

\begin{figure}
    \includegraphics[width=0.95\linewidth]{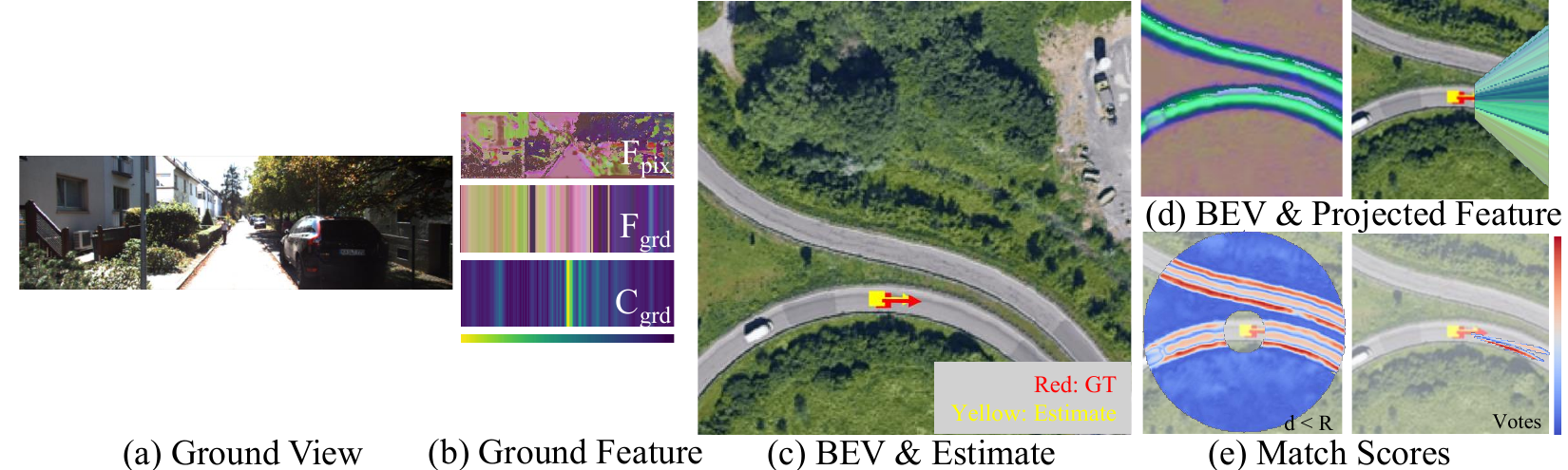}
    \caption{Visualization of our yaw estimation in the KITTI dataset, cross-area (Test~2). Our method generalizes to unseen geographic areas. The same area is mostly urban; our method generalizes to curved highways in the cross-area data, finding radial correspondences along road boundaries.}
    \label{fig:appendix_example_kitti_cross}
\end{figure}

\begin{figure}
    \includegraphics[width=0.95\linewidth]{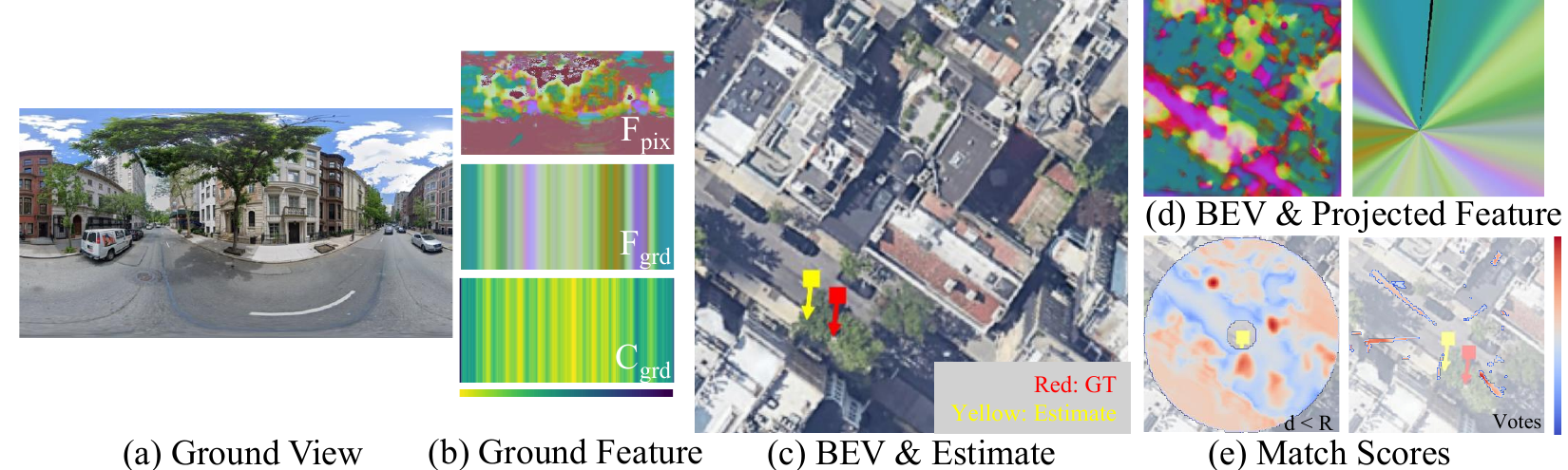}
    \caption{Visualization of our yaw estimation in the VIGOR dataset, same-area setup. Our method aligns a specific building and road.}
    \label{fig:appendix_example_vigor_same}
\end{figure}

\begin{figure}[t]
    \includegraphics[width=0.95\linewidth]{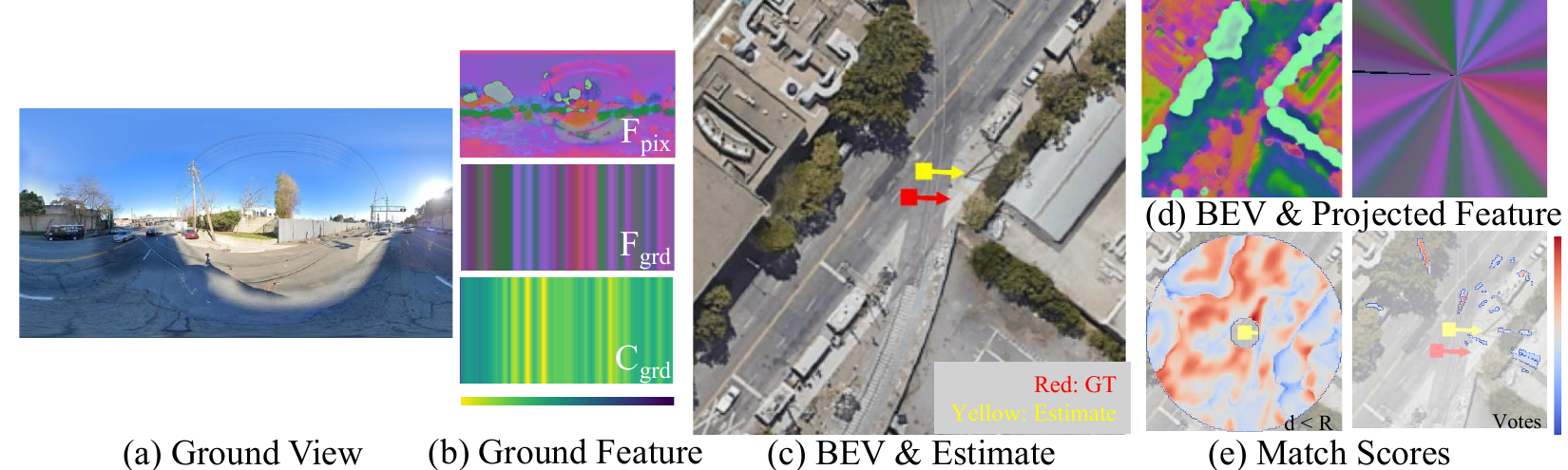}
    \caption{Visualization of our yaw estimation in the VIGOR dataset, cross-area setup. Our method aligns the road and a tree.}
    \label{fig:appendix_example_vigor_cross}
\end{figure}

In this section, we demonstrate the matching process and explain how our method accurately estimates yaw with more examples. We visualize the matching process using Principal Component Analysis (PCA). We extract three major components to represent it as RGB channels. For each sample, ground column features and BEV features are sampled and re-scaled for PCA. Since we use an absolute similarity score, feature vectors are normalized by flipping the sign when they have a negative similarity with the mean of all feature vectors.

Experiments in the MGL~\citep{sarlin2023orienternet} dataset revealed that our method not only focuses on the road, but also on prominent roadside features. Fig.~\ref{fig:appendix_example1} and Fig.~\ref{fig:appendix_example2} provide examples from the MGL dataset that focus on a tree, colored in dark purple and cyan, respectively. Fig.~\ref{fig:appendix_example3} shows the case where our process focuses on the building in the road front direction, with the column feature visualized as light brown on the left side of the ground image. Generally, our method estimates yaw based on the focused score in one direction. 

The Ford~\citep{Ford-Multi-AV} and VIGOR~\citep{zhu2021vigor} dataset results primarily focus on the driveway where the datasets were captured. An example from the Ford dataset is visualized in Fig.~\ref{fig:appendix_example_ford}, with ground-view-related features resized to their original aspect ratio. The features from the left and right sides of the view appear different, due to positional encoding attached to the input image. Match scores are focused on road lanes, and the final estimation primarily comes from the lane in the front direction.

The KITTI~\citep{Geiger2013KITTI} dataset results are shown in Fig.~\ref{fig:appendix_example_kitti_same} (same-area, Test~1) and Fig.~\ref{fig:appendix_example_kitti_cross} (cross-area, Test~2). The cross-area example demonstrates generalization to previously unseen geographic regions. Similar to the Ford dataset, matches are focused on road structure, as KITTI also focuses on ground-level views from a moving car.

Fig.~\ref{fig:appendix_example_vigor_same} shows the matching in the VIGOR dataset, same-area setup. Our method identifies correspondences such as sidewalk boundary (brown) and driveway (purple).
Fig.~\ref{fig:appendix_example_vigor_cross} illustrates the matching process using the VIGOR dataset, specifically the cross-area setup. At the road intersection, our method identifies correct road correspondence and finds yaw, focusing on a road (purple) and a tree (dark green) near the other road.

\section{Computational Cost Analysis}
\label{sec:computation_cost_detail}
\begin{table}[t]
\centering
\caption{Per-batch forward pass breakdown}
\label{tab:computational_cost}
\begin{tabular}{|l|c|}
\hline
Stage & Percentage in Time \\
\hline
Feature extraction & 24.2 \\
Pose plausibility network & 0.9 \\
Ground-BEV matching & 1.7 \\
Yaw voting & 73.2 \\
\hline
\end{tabular}
\end{table}

Table~\ref{tab:computational_cost} breaks down the per-batch forward pass by stage. The majority of computation comes from voting; however, it remains comparable to the feature extraction process. Note that the voting with absolute and relative yaw indexing is not implemented efficiently using the existing PyTorch function. A better kernel implementation can improve runtime performance.

\section{Failure Case Analysis}
\label{sec:failure_cases}

We identify three main categories of failure cases:

\paragraph{Severe Temporal Change}
When the BEV image and ground image are captured at significantly different times, scene content may have changed substantially due to construction, seasonal vegetation changes, or demolition. In such cases, the learned features may fail to find correct correspondences because the visual content no longer matches between views. Datasets include some temporal variation, and our method demonstrates robustness to moderate changes; however, extreme differences (e.g., a demolished building or newly constructed road) can cause failures.

We observed that other failures often accompanied more common temporal changes, such as mismatched weather conditions, day and night, and trees blocking the road. While those typically do not significantly impact estimation, since those shifts can be learned during training, they can impair performance when combined with other sources of confusion, such as the failure cases below.

\begin{figure}[!ht]
    \includegraphics[width=0.95\linewidth]{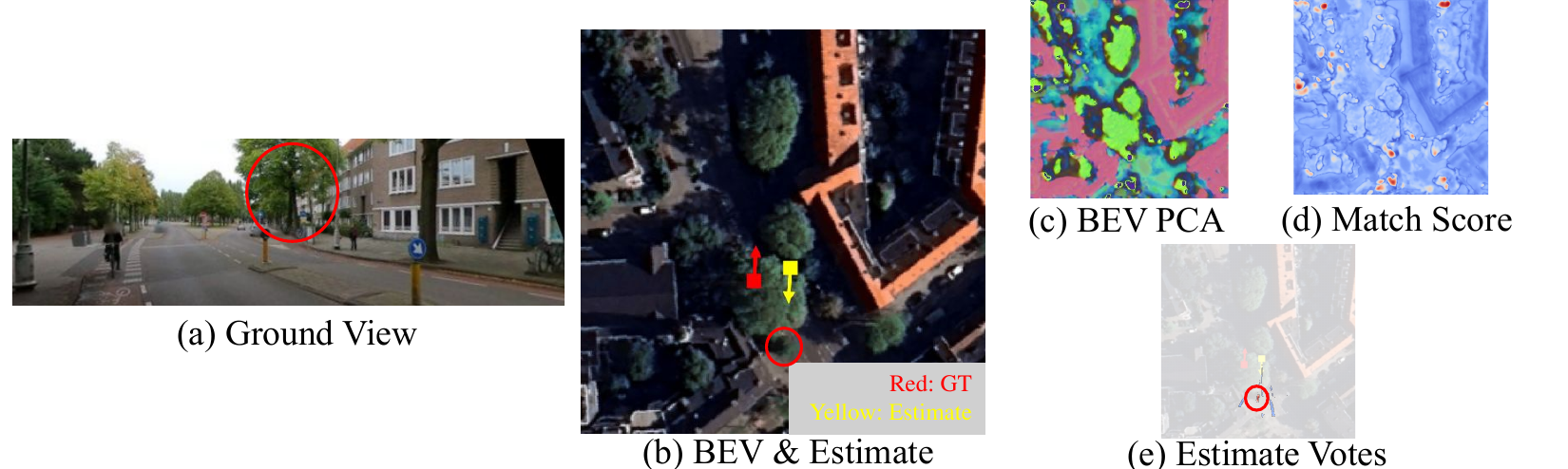}
    \caption{Failure case examples in the MGL dataset \textbf{Severe Occlusion of BEV}: road is completely covered by trees when the BEV image was captured. Severe occlusion makes it difficult to match ground view and BEV, especially in day night time change from ground view to BEV. For the estimated pose, vote mainly comes from a small tree in the BEV, and high match scores (red color) are generally on small trees, which aligns with the tree in the middle of image.}
    \label{fig:failure_cases_occlusion}
\end{figure}

\paragraph{Severe Occlusion of BEV}
There are some ground-level images taken in places that are occluded from a top-down view, such as a road completely covered by trees in Fig.~\ref{fig:failure_cases_occlusion}. Due to a lack of matching visual features, our method focuses on a small tree in the ground view; however, it matches a different tree, and without support from surrounding visuals, this leads to yaw estimation failure.

\begin{figure}[!ht]
    \includegraphics[width=0.95\linewidth]{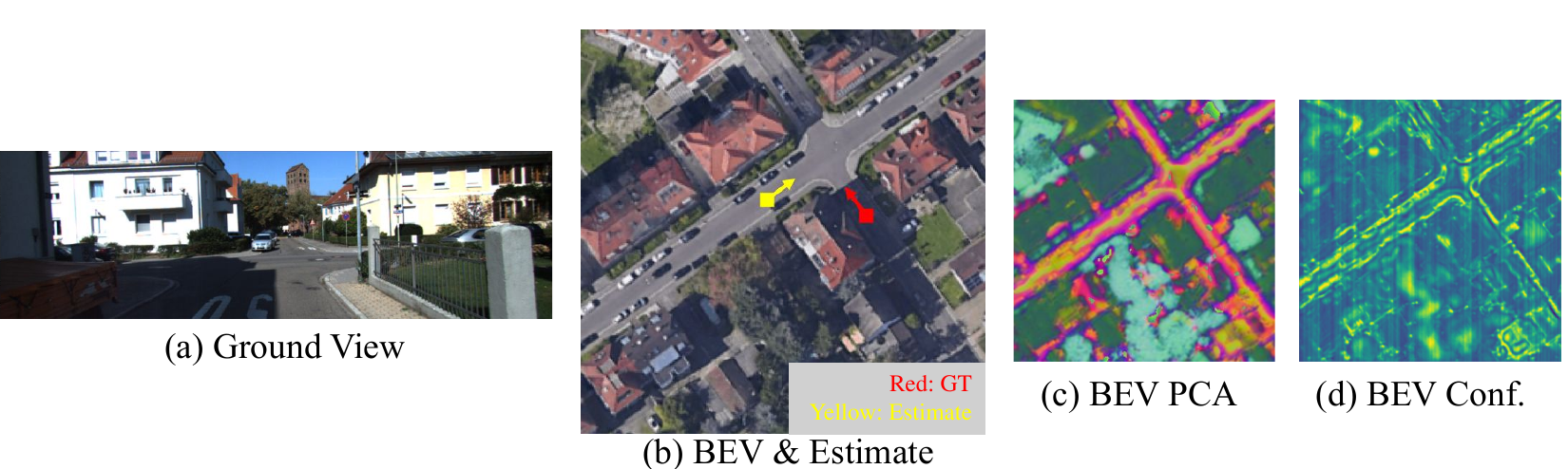}
    \caption{Failure case examples in the KITTI dataset. \textbf{Symmetric scene}: a crossroad produces ambiguous votes between two visually similar directions. Temporal impact, such as shades on the road, reduced confidence (viridis colormap, yellow indicates high confidence, green indicates lower) on the correct road, resulting in similar-looking different road positions selected.}
    \label{fig:failure_cases_symmetry}
\end{figure}

\paragraph{Symmetric Scenes}
At locations with rotational symmetry, such as crossroads with four visually similar streets, circular plazas, or repetitive building patterns, multiple radial directions may produce equally strong votes, such as Fig.~\ref{fig:failure_cases_symmetry}, leading to ambiguous yaw estimates. The voting mechanism may split its consensus across multiple yaw angles or select an incorrect symmetric alternative. This failure mode is inherent to any appearance-based matching method operating under high yaw uncertainty.

\section{Extended Analysis}
\label{sec:extended_analysis}
\subsection{Yaw Error Distribution}
\label{sec:yaw_full_stats}
Table~\ref{tab:yaw_threshold_stats} reports the 1/2/3/4/5-degree threshold accuracies together with the median and mean yaw error for MGL and KITTI, where threshold metrics are the standard summary for limited-FoV camera datasets. The $1/2/4^{\circ}$ values differ slightly from the main table, recomputed after a later code update that left the method core unchanged. The same pattern holds on VIGOR, where the median yaw error is $1.94^{\circ}$ (same-area) and $5.45^{\circ}$ (cross-area), far below the mean of $35.7^{\circ}$ and $45.5^{\circ}$. The gap stems from a front/back ambiguity: a column-aggregated road correspondence looks nearly identical $180^{\circ}$ apart, so a small fraction of predictions flip and inflate the mean. Asymmetric cues such as building facades, or a temporal prior (speed, heading continuity) in navigation, resolve this binary ambiguity, making the median and threshold accuracy the representative indicators.

\begin{table}[!ht]
\centering
\caption{Yaw threshold accuracy (\%, $\uparrow$) and median/mean yaw error (\textdegree, $\downarrow$) for \ourmethod{} on MGL and KITTI under $\pm180^{\circ}$ yaw uncertainty.}
\label{tab:yaw_threshold_stats}
\setlength{\tabcolsep}{5pt}
\scriptsize
\begin{tabular}{|l|ccccc|cc|}
\hline
\multirow{2}{*}{Split} & \multicolumn{5}{c|}{Threshold accuracy (\%) $\uparrow$} & \multicolumn{2}{c|}{Error (\textdegree) $\downarrow$} \\
 & $<1^{\circ}$ & $<2^{\circ}$ & $<3^{\circ}$ & $<4^{\circ}$ & $<5^{\circ}$ & Median & Mean \\
\hline
MGL                  & 34.81 & 52.42 & 58.78 & 61.74 & 63.29 & 1.79 & 44.26 \\
KITTI (same-area)    & 51.07 & 62.47 & 64.06 & 64.64 & 64.75 & 0.96 & 46.17 \\
KITTI (cross-area)   & 48.44 & 57.62 & 58.63 & 58.80 & 58.87 & 1.07 & 54.94 \\
\hline
\end{tabular}
\end{table}

\subsection{Matching-only Estimation}
\label{sec:sm_only}
Our framework combines $S_M$ with a learned pose-plausibility term $S_P$. To isolate the yaw signal carried by $S_M$, we set $S_P$ to zero at every voting level of a trained \ourmethod{} model and re-run the forward pass.

On MGL dataset, the matching-only score at the GT location ranks the GT yaw bin in the top $2.5\%$ of all yaw bins (mean rank percentile $0.926$, median $0.975$).
The matching-only sub-degree accuracy at the GT location is $15.10\%$, $<2^{\circ}$ is $28.47\%$, $<4^{\circ}$ is $49.07\%$, above the strongest baselines in Table~\ref{tab:comparison_mgl}\ofmainpaper{}.

Training \ourmethod{} with $S_P$ removed converges but is unstable: early matching scores are near-uniform across many 2D pose bins, so the softmax loss is dominated by correct-match-to-wrong-location votes before features become discriminative, reaching $14.18 / 25.84 / 44.25$ within $1^{\circ}/2^{\circ}/4^{\circ}$ on MGL $\pm180^{\circ}$. $S_P$ therefore serves mainly as an early-training stabilizer, while radial voting carries the yaw signal.

\section{Discussion and Future Work}
\label{sec:discussion}
Our method has limitations in exact location estimation when used standalone, due to the formulation that adds a match score to all pixels. Combining line-aligning scoring with the conventional projection-based method has the potential to yield more accurate yaw and location measurements.

Further improving cross-view localization for outdoor AR and MR applications is a promising direction; however, it poses new challenges due to the significantly greater pose freedom and the rich surrounding environment. Most existing vehicle-based works assume that features are mapped to flat ground and roads in the vehicle dataset. While the MGL dataset provides a more varied environment, the less-structured scenes typical of AR and MR applications present significant opportunities for future exploration.

The 2D map-based localization~\citep{sarlin2023orienternet, 2024maplocnet} and the cross-view localization have different advantages. The 2D map provides semantic information about the surroundings but does not provide exact visual cues. The BEV from the cross-view localization setup provides precise visual cues but lacks semantic meaning. The fusion of two modalities would guide a new direction to more challenging AR and MR localization.

\fi
\end{document}